\newtheorem{theorem}{Theorem}
\newcommand{\q}[1]{``#1''}
\newcommand{\BibTeX}{B\kern-.05em{\sc i\kern-.025em b}\kern-.08em\TeX}
\begin{document}


\begin{frontmatter}


\paperid{123} 


\title{Molecular Topological Profile (MOLTOP) - Simple and Strong Baseline for Molecular Graph Classification}


\author[A]{\fnms{Jakub}~\snm{Adamczyk}\orcid{0000-0003-4336-4288}\thanks{Corresponding Author. Email: jadamczy@agh.edu.pl.}}
\author[B]{\fnms{Wojciech}~\snm{Czech}\orcid{0000-0002-1903-8098}}

\address[A]{Faculty of Computer Science, AGH University of Krakow, Krakow, Poland}
\address[B]{Faculty of Computer Science, AGH University of Krakow, Krakow, Poland}

\begin{abstract}
We revisit the effectiveness of topological descriptors for molecular graph classification and design a simple, yet strong baseline. We demonstrate that a simple approach to feature engineering - employing histogram aggregation of edge descriptors and one-hot encoding for atomic numbers and bond types - when combined with a Random Forest classifier, can establish a strong baseline for Graph Neural Networks (GNNs). The novel algorithm, Molecular Topological Profile (MOLTOP), integrates Edge Betweenness Centrality, Adjusted Rand Index and SCAN Structural Similarity score. This approach proves to be remarkably competitive when compared to modern GNNs, while also being simple, fast, low-variance and hyperparameter-free. Our approach is rigorously tested on MoleculeNet datasets using fair evaluation protocol provided by Open Graph Benchmark. We additionally show out-of-domain generation capabilities on peptide classification task from Long Range Graph Benchmark. The evaluations across eleven benchmark datasets reveal MOLTOP's strong discriminative capabilities, surpassing the $1$-WL test and even $3$-WL test for some classes of graphs. Our conclusion is that descriptor-based baselines, such as the one we propose, are still crucial for accurately assessing advancements in the GNN domain.
\end{abstract}

\end{frontmatter}


\section{Introduction}

Graph classification has become a crucial type of supervised learning problem, increasingly relevant across various scientific domains. This surge in importance is largely attributed to the expanding quantity of structured datasets that represent pairwise relationships among various types of modeled entities. Graph classification algorithms are utilized in a variety of fields, particularly in chemoinformatics, where their application in Quantitative Structure-Activity Relationship (QSAR) modeling plays a critical role in predicting the functions of biochemically significant molecules \cite{deng2023systematic}. Particularly, the prediction of ADME (Absorption, Distribution, Metabolism, Excretion) pharmacokinetic properties plays a pivotal role in supporting contemporary in-silico drug design \cite{fang2023prospective}. 

Graph classification confronts a fundamental difficulty: measuring the dissimilarity between objects that are not situated in a metric space. Therefore, graphs, unlike more straightforward tabular or categorical data, require special methods to capture their complexity and relationships. Traditionally, this problem was solved by extracting isomorphism-invariant representations of graphs in the form of feature vectors, also known as graph embeddings, descriptors, or fingerprints. Alternatively, explicit pairwise similarity measures, known as graph kernels, can be constructed to systematically compare graph substructures \cite{kriege2020survey}. Both methods remain intrinsically unsupervised or task-independent, however domain-specific knowledge can be incorporated by careful feature engineering.
Although graph descriptors have achieved success in various benchmark classification tasks, more recently, they are often surpassed by the more advanced graph representation learning models exemplified by Graph Neural Networks (GNNs). They learn task-specific representations and can take advantage of pre-training to reduce negative effects of limited training data \cite{Pretraining_GNNs_Hu,R_MAT_Maziarka}. In developing a universal framework for graph classification, GNN models frequently incorporate descriptors, either as a method of input data augmentation or as supplementary global features in the readout layer \cite{DMPNN_Yang, dwivedi2023benchmarking}.

Given the computational expense of graph representation learning, the requirement for extensive training data, the challenge of transferring pre-trained knowledge to specialized prediction tasks, and the prevalence of domain-specific graph descriptors, comparing GNNs with traditional methods remains valuable. This is particularly true when descriptor-based methods serve as a baseline, indicating whether GNNs can learn additional, task-specific features. 
The studies \cite{liao2021we} and \cite{dehghani2021benchmark} have identified significant obstacles hindering progress in the field of machine learning. These include challenges in effectively evaluating models, particularly issues related to non-replicable results and comparisons using inadequate baselines. Besides, the study presented in work \cite{Statistical_comparisons_Demsar} advocates for statistical rigor, when comparing classifiers across multiple datasets.
More specifically, in the graph classification field, the authors of \cite{GNN_fair_comparison_Errica} describe problems with replicating GNN results caused by lack of strict separation between model selection and model evaluation step. Moreover, they show that under a fair comparison framework, simple structure-agnostic baselines can outperform GNN models such as GIN or GraphSAGE. In \cite{luzhnica2019graph} the authors demonstrate that trivial $1$-layer GCN can perform on par with complex GNNs such as DiffPool. The work \cite{zhang2022nafs} similarly notes the effectiveness of training-free vertex descriptors in link prediction tasks.
In the realm of molecular graph classification it was shown that descriptor-based models, particularly those utilizing molecular fingerprints, not only yield better average prediction results than GNN models but also are computationally cheaper by an order of magnitude \cite{pappu2020making,jiang2021could}. The clear need of comparable prediction results and maintaining fair leaderboards led to the creation of benchmark datasets and related evaluation protocols such OGB \cite{OGB_Hu}, MoleculeNet \cite{MoleculeNet_Wu} or TDC \cite{huang2021therapeutics}.

Motivated by research underscoring the value of robust baselines, and inspired by recent methods utilizing graph topology descriptors \cite{LDP,LTP}, we propose Molecular Topological Profile (MOLTOP), a baseline method for molecular graph classification, utilizing both topological descriptors and simple atom and bond features. The resulting baseline, under the fair evaluation protocols offered by modern benchmarks, results in a surprisingly efficient and strong model, able to outperform contemporary GNNs. Our method is fast, scalable, robust in distinguishing graphs, non-parametric, and it exhibits low-variance in prediction tasks. Additionally, we present the studies verifying expressive power and feature importance of the proposed representation.

The code is available at \href{https://github.com/j-adamczyk/MOLTOP}{https://github.com/j-adamczyk/MOLTOP}.

\section{Related works}
\label{related_works}

Graph descriptors, which generate isomorphism-invariant vectors representing graphs, exemplify the feature-engineering approach to graph classification. Descriptors are versatile in representing features at different levels – from granular to aggregated, local to global \cite{czech2012invariants}, and from purely structural aspects to those including multidimensional labels \cite{lazarz2021relation}.  
In practical applications, the descriptors from spectral graph theory \cite{de2018simple, pineau2019using} or the ones using histogram aggregation of vertex/edge topological features \cite{LDP, LTP} have successfully rivaled more complex methods. The approach of generic graph descriptors was expanded by incorporating domain-specific representations, like molecular fingerprints, which have become widely used in predicting biochemical properties and molecular database search. Typical fingerprints are bit-vectors of a given size, built based on depth-first search explorations from each atom, and incorporating its 2D \cite{durant2002reoptimization,stiefl2006erg} or 3D structure \cite{axen2017simple, nilakantan1987topological}. The molecular property prediction based on molecular fingerprints can be highly competitive to GNNs, as shown in \cite{jiang2021could} and evident from OGB leaderboards \cite{OGB_Hu}.

Bypassing the need for manual feature engineering, GNNs provide an automated method for extracting task-specific graph features and transporting them directly to a trainable readout layer. Starting from early works introducing Graph Convolutional Network (GCN) \cite{GCN} and GraphSAGE \cite{hamilton2017inductive} the field of graph representation learning has evolved significantly, leading to the development of numerous models, as categorized by \cite{liu2022taxonomy}. Some of these models, e.g., Graph Isomorphism Networks (GIN) \cite{GIN} achieved state-of-the-art performance in benchmark graph classification tasks including molecular property prediction. GIN was designed to match the discriminative power of the Weisfeiler-Lehman isomorphism test, thereby offering additional insights into the representational capabilities of GNNs. Subsequently, in \cite{DMPNN_Yang} the authors proposed a hybrid model D-MPNN, which combines edge-centered graph convolutions and molecular descriptors concatenated at the readout layer. That work represents a significant advancement in molecular graph classification, notable not only for its comprehensive and detailed analysis of model efficiency but also for its successful integration of the strengths of both GNNs and descriptors. In their work, \cite{AttentiveFP_Xiong} adopted the graph attention mechanism to develop the AttentiveFP model. This method is capable of utilizing atom and bond features, effectively extracting both local and global properties of a molecule. 

When operating in a low-resource learning regime, GNNs often struggle to build discriminative representations of graphs. The success of transfer learning in the field of Natural Language Processing (NLP), coupled with the scarcity of training data in molecular property prediction, has inspired researchers to adopt different pre-training strategies tailored for GNNs.
In their work, \cite{Pretraining_GNNs_Hu} introduced a comprehensive framework that employs pre-training techniques like context prediction or attribute masking. This approach enables the transfer of knowledge from large molecular datasets to general-purpose GNNs, enhancing classification accuracy on benchmark tasks.
In parallel, the transformer-style architecture GROVER was introduced by \cite{GROVER_Rong}, reporting notable advancements over existing state-of-the-art methods. It utilized the largest pre-training database at the time, comprising 10 million unlabeled molecules. Graph Contrastive Learning (GraphCL) \cite{GraphCL} was introduced as another self-supervised learning method, leveraging parameterized graph augmentations and maximizing the mutual information between augmentations sharing the same semantics. GraphCL was further extended by enabling automatic, adaptive selection of augmentation parameters \cite{JOAO} (JOAO). New pre-training strategies leveraging 2D topological structures extracted by encoders and enriched by 3D views led to development efficient GraphMVP framework \cite{liu2021pre}. More recently, the GEM model \cite{GEM_Fang} proposed incorporating 3D molecular properties, based on Merck molecular force field (MMFF) simulations. Combining those geometric features with the GIN model and pretraining on 20 million molecules from the ZINC database led to exceptional performance in a range of graph classification and regression tasks, although at a very high computational cost. The most recent work, \cite{R_MAT_Maziarka} describes relative molecule self-attention transformer (R-MAT), which uses atom embeddings reflecting relative distance between atoms. R-MAT reports SOTA results of molecular benchmarks, but uses different datasets and data splits than other models, therefore it is difficult to compare to this approach.

In contrast to multiple works reporting high efficiency of pre-trained GNN models, many thorough ablation studies, such as \cite{GNN_pretraining_benchmark_Sun}, provide contrary results. They present important findings on why feature engineering combined with low parameter machine learning can still outperform complex models, and why the pre-training benefits can be diminished in practical property prediction setups.

\section{Preliminaries}
\label{preliminaries}

\textbf{Molecular graph.} Let $G = (V, E)$ denote an undirected graph representing a molecule, where $V$ and $E$ are the sets of vertices (nodes, atoms) and edges (links, bonds), respectively. We also mark $G = (A, X_n, X_e)$, where $A$ is the adjacency matrix, $X_n$ is the node feature matrix and $X_e$ is the edge feature matrix.

\textbf{Graph notation}. We denote single vertices as $v$ or $u$, and edges as two element vertex sets $e = \{u, v\}$. $\mathcal{N}(v)$ is the set of neighbors of a node $v$, $\mathrm{deg}(v)$ is the degree of a node $v$, i.e. the number of its neighbors, $\mathrm{deg}(v) = |\mathcal{N}(v)|$.

\textbf{Graph classification.} We consider the graph classification task, where we are given a dataset $\mathbb{D} = \left( G^{(i)}, Y^{(i)} \right)$, $i = 1, 2, ..., N$, of $N$graphs and their labels. Class (label) for a given graph $Y^{(i)}$ is a boolean for single-task datasets. For multitask datasets, it is a binary vector of length $T$ (for $T$ binary classification tasks), and it can have missing labels.

\section{Method}
\label{methods}

We propose Molecular Topological Profile (MOLTOP) as a baseline method for benchmarking against GNNs in molecular graph classification tasks. Baselines are simple and computationally cheap methods, expected to provide a reference point for more sophisticated methods. While not a focus point of any paper, they are a necessary part of fair valuation of new algorithms, especially on new datasets.

For MOLTOP, given its role as a baseline method, simplicity and speed are just as crucial as classification accuracy. In order to achieve good performance on chemical data, we utilize both topological and molecular features. The method relies on extracting feature vectors from graphs independently, and using Random Forest to classify resulting tabular data. In contrast to previous baselines, either purely topological (e.g. LDP \cite{LDP} and LTP \cite{LTP}), or purely feature-based (\q{molecular fingerprint} from \cite{GNN_fair_comparison_Errica}), it incorporates both graph structure and atoms and bonds features, all of which are crucial in chemistry.

The first group of features we consider are vertex degree statistics, to directly summarize the basic topology of a $2$-hop neighborhood around each node \cite{LDP}. We denote the multiset of vertex neighbors' degrees as $DN(v) = \{ \mathrm{deg}(u) | u \in \mathcal{N}(v) \}$. For each atom, we then calculate the following statistics: $\mathrm{deg}(v)$, $\min(DN(v))$, $\max(DN(v))$, $\mathrm{mean}(DN(v))$, $\mathrm{std}(DN(v))$. In order to create graph-level features, they are compactly represented using histograms, a technique akin to the global readout in GNNs, but with higher expressivity than just simple mean or sum \cite{Weave_Kearnes}.

For molecular graphs, especially in medicinal chemistry, having a degree higher than $8$ is very rare. Using the same number of bins for all features would result in a very large number of all-zero features for many molecules. Therefore, we propose to reduce the number of bins to $11$ for $\mathrm{deg}(v)$, $\mathrm{min}(DN(v))$ and $\mathrm{max}(DN(v))$. This covers singular hydrogens, covalent bonds, and nearly all atoms with higher degrees than $8$ (e.g. due to ionic or metallic bonding) in typical biochemistry.

Inspired by previous structural approaches and path-based molecular fingerprints, we add further topological descriptors to enhance this representation. We select features that work well for describing molecular fragments, and that should discriminate well between different scaffolds and functional groups. Concretely, we selected Edge Betweenness Centrality (EBC), Adjusted Rand Index (ARI) and SCAN Structural Similarity score. Each of those descriptors is computed for edges (bonds), but focuses on a different aspect of molecule structure. EBC considers global graph connectivity structure and its shortest path-based properties. ARI uses $3$-hop subgraphs and neighborhood connectivity patterns. SCAN also considers local connectivity patterns, but is based on the notion of node clusters and outliers. Therefore, those features should provide complementary information in the feature vector. Another reason for utilizing edge features is that similar edge-focused approaches were successful in improving GNNs for molecular property prediction \cite{DMPNN_Yang,Weave_Kearnes,GPS_transformer}. Each of those features is calculated for all bonds in the molecule and aggregated with a histogram.

\textbf{Edge betweenness centrality (EBC)} \cite{EBC_Girvan} is a centrality measure for edges, defined as a fraction of shortest paths going through that edge:
\begin{equation}
EBC(e) = \frac{2}{|V|(|V|-1)} \sum_{u, v \in V} \frac{\sigma_{u,v}(e)}{\sigma_{u,v}},
\end{equation}
where $\sigma_{u,v}$ is the total number of shortest paths between $u$ and $v$, and $\sigma_{u,v}(e)$ is the number of that paths going through $e$. The normalization factor before the sum ensures that the values lie in range $[0, 1]$ and are unaffected by graph size. Information about the shortest paths in the graph is well known to be important in chemistry, being used e.g. in Wiener index and Hyper-Wiener index \cite{Wiener_type_indices}, and was also successfully incorporated into multiple GNNs \cite{Graphormer_Ying,GPS_transformer}. However, the histogram of centralities includes more information than the lengths of shortest paths would, because it shows the actual distribution of critically important edges. If there are bonds with very high EBC values, it indicates the existence of bridge-like subgraphs, such as glycosidic bonds. It can also easily distinguish between linear and polycyclic scaffolds, since the ring-rich topologies will have smaller EBC values in general, while linear structures have many high-EBC bonds.

\textbf{Adjusted Rand Index (ARI)} \cite{ARI_Hoffman} is a normalized measure of overlap between neighborhoods of two vertices $u$ and $v$:
$$
ARI(u, v) = \frac{2(ad - bc)}{(a + b)(b + d) + (a + c)(c + d)},
$$
where $a$ is the number of edges to other vertices that $u$ and $v$ have in common, $b$ is the number of edges to other nodes for $u$ that $v$ does not have, $c$ is the number of edges to other nodes for $v$ that $u$ does not have, and $d$ is the number of edges to other nodes that neither $u$ nor $v$ has. Calculated for edge $e = \{u, v\}$, it provides information about the subgraphs of radius 3 (from neighbors of $v$, through edge $e$, to neighbors of $u$).

Among various neighborhood overlap measures, ARI has a particularly strong statistical interpretation, being equivalent to Kohen's $\kappa$ defined on incident edge sets of $u$ and $v$ \cite{ARI_Hoffman}. While this measure is typically used for link prediction, it can also be calculated for existing edges. This method has been used for identifying 'incorrect' links, where it surpassed other techniques \cite{ARI_Hoffman}, and a similar approach was also used in LTP \cite{LTP}. Therefore, the histogram of ARI values should work well for existing edges, taking into consideration larger subgraphs than degree features and indicating the general connectivity patterns in a graph. In particular, it is capable of differentiating between star-like graphs (such as spiro compounds or functional groups containing atoms with high coordination number, e.g. phosphate groups) and polycyclic molecules characterized by grid-like subgraphs, like polycyclic aromatic hydrocarbons.

\textbf{SCAN} \cite{SCAN_clustering_Xu,SCAN_sparsification_Chen}, used for node clustering and graph sparsification, defines the structural similarity score for edges as:
\begin{equation}
SCAN(u, v) = \frac{|\mathcal{N}(u) \cap \mathcal{N}(v)| + 1}{\sqrt{(\mathrm{deg}(u) + 1)(\mathrm{deg}(v) + 1)}}
\end{equation}
SCAN scores were designed to detect the edges critical for graph connectivity, and those corresponding to outliers. In molecules, the distribution of SCAN scores can easily distinguish between linear structures (e.g. alkanes with long carbon chains), where the scores are low in general, and well-connected, ring-rich molecules (e.g. steroids).

Molecular graph classification relies heavily on atom and bond features, meaning that baselines utilizing only graph topology are not expressive enough. In fact, purely feature-based \q{molecular fingerprint} baseline from \cite{GNN_fair_comparison_Errica}, using only atom counts (i.e. counts of different chemical elements), can outperform some GNNs. Therefore, MOLTOP incorporates two such features: atomic numbers and bond types. They are the most apparent and consistently available features, universally employed by GNNs for analyzing molecules \cite{Pretraining_GNNs_Hu,OGB_Hu,AttentiveFP_Xiong}.

For atoms, we one-hot encode the atomic numbers up to 89 (with zero marking unknown types). We discard actinides and all further molecules, since they are all radioactive and extremely rarely used. For each chemical element, we compute its mean, standard deviation and sum (total count in the molecule) as graph-level features. We do the same for bonds, with $5$ possible types (single, double, triple, aromatic, or miscellaneous). In principle, one could add further features the same way, if they are known to be important for a given problem, e.g. chirality.

All features are computed for each graph independently, with the same number of bins $n_{bins}$ for all histograms. This results in $11 \cdot n_{bins} + 90 \cdot 3 + 5 \cdot 3$. This can result in many all-zero features, especially for atom features. We simply drop all such columns, based on the training data. Therefore, the number of features is often significantly reduced after this step.

The only hyperparameter of the feature extraction is the number of bins for histograms $n_{bins}$. In other works \cite{LDP,LTP,Weave_Kearnes} this is either a hyperparameter, requiring tuning, or just an arbitrarily set number. For MOLTOP, we propose a data-driven solution instead, setting the number of bins equal to the median size of the molecules (i.e. number of atoms) in the dataset. This is motivated by the fact that molecule sizes for drug-like compounds typically follow a right-skewed, single-modal distribution with number of atoms very rarely exceeding $50$ (see the supplementary material for plots). This fact is often used in medicinal chemistry, e.g. by Lipinski's rule of 5 \cite{Lipinski_rule_of_5}. Therefore, for the vast majority of data, it would bring little benefit to use a high number of bins. With this addition, MOLTOP does not require any hyperparameter tuning for feature extraction.

After feature extraction, we use Random Forest (RF) as a classifier. 
RF serves as an effective prediction model due to its low computational complexity and high scalability. Moreover, its performance is less sensitive to hyperparameter choices, unlike other commonly used classifiers such as SVMs or boosting methods \cite{Tunability_Probst}. It also natively supports multitask learning, which is common in molecular graph classification. The dimensionality of our representation is quite high, therefore we use larger number of trees and stronger regularization than default settings in Scikit-learn \cite{Scikit_learn_Pedregosa}, to better incorporate new features and prevent overfitting. Based on the average results on validation sets of MoleculeNet (detailed in \cref{experiments}), MOLTOP uses $1000$ trees, the entropy as splitting criterion, and minimum of $10$ samples to perform a split. Those reasonable defaults make it a hyperparameter-free method, making it extremely easy to use and computationally cheap, which is important for baseline methods.

\subsection{Complexity analysis}
\label{subsection_complexity_analysis}

The computational complexity of MOLTOP is the sum of complexities of its features, since they are computed independently. Vertex degree features have complexity $O(|E|)$ \cite{LDP}. Computing EBC has complexity $O(|V||E|)$ \cite{EBC_complexity_Ulrik}. Calculation of both ARI and SCAN Structural Similarity scores for all edges is pessimistically $O(|V||E|)$, but the expected complexity for molecular graphs is $O(|E|)$ due to their sparsity (for proofs, see the supplementary material). The total complexity of feature extraction is thus $O(|V||E|)$.

\section{Experiments and results}
\label{experiments}

For the main evaluation of the proposed method, we selected $8$ classification datasets from MoleculeNet benchmark \cite{MoleculeNet_Wu} (described in detail in the supplementary material), the most widely used molecular graph classification benchmark. For fair evaluation, we used deterministic scaffold split, with the splits provided by OGB \cite{OGB_Hu}. This setting is much more challenging than random split, which does not enforce out-of-distribution generalization. In addition, $5$ of those datasets are multitask, including massively multitask ToxCast dataset with $617$ targets. In all cases, we follow the recommendation from \cite{Pretraining_GNNs_Hu}, training $10$ models with different random seeds, and we report mean and standard deviation of AUROC.

For implementation of MOLTOP feature extraction, we used PyTorch Geometric \cite{PyTorch_Geometric_Fey} and NetworKit \cite{NetworKit_Angriman}. Those frameworks provide efficient data structures and parallel processing. For Random Forest, we use Scikit-learn \cite{Scikit_learn_Pedregosa}. Since this implementation does not allow missing labels in the training set, we fill them with zeros. This is acceptable, since those tasks are already imbalanced, and such adjustment makes this even a bit more challenging.

\subsection{Validation set experiments}
\label{section_validation_set_experiments}

During initial experiments, we verified our modelling choices by using average AUROC on validation sets. This setting was chosen, since we aimed to design a general baseline, that performs well on average for molecular classification. We started with only degree features with $50$ bins (inspired by \cite{LTP}), and added proposed improvements one by one. First, we validated that adding other topological descriptors, e.g. other centrality scores than EBC or other neighborhood overlap than ARI, gave results worse or similar to our proposed descriptors. Next, we confirmed that using all proposed statistics of atoms and bonds is crucial. Furthermore, we verified that using median molecule size as the number of bins gave results better or comparable to manual tuning. Lastly, we performed hyperparameter tuning of Random Forest, and resulting values were 1000 trees, entropy splitting criterion, and minimum of 10 samples to perform a split. Those align with our postulate to use more trees and stronger regularization.

We summarize the impact of adding described improvements in \cref{table_improvements_results} (for more detailed tables see the supplementary material). We report average AUROC and standard deviation for test sets of all $8$ MoleculeNet datasets. All proposed changes improve the results, in particular the introduction of atom and bonds features in addition to pure topology. This shows that effective baselines for molecular data have to use both structure and domain-relevant features.

\begin{table}[ht!]
\vskip 0.05in
\caption{The results of model improvements.}
\label{table_improvements_results}
\centering
\resizebox{\columnwidth}{!}{
\begin{tabular}{|c|c|}
\hline
Model                                   & Avg. AUROC $\uparrow$ \\ \hline
Degree features, 50 bins                            & 63.8$\pm$0.6          \\ \hline
Add topological edge features                & 65.5$\pm$0.9          \\ \hline
Add atoms and bonds features            & 69.0$\pm$0.8          \\ \hline
Median bins                             & 69.4$\pm$0.9          \\ \hline
Reduce degrees bins, drop constant features & 70.4$\pm$0.7          \\ \hline
Use tuned Random Forest hyperparameters      & 72.5$\pm$0.5          \\ \hline
\end{tabular}
}
\vskip -0.1in
\end{table}

\subsection{MoleculeNet classification}
\label{subsection_moleculenet_classification}

\begin{table*}[ht!]
\caption{Classification results on MoleculeNet. \q{Pretr} denotes if the model is pretrained. The best result for each dataset is bolded, and also for each model group (general-purpose GNNs, molecular GNNs, baselines), the best model and its average AUROC and rank are bolded.}
\label{table_main_results}
\resizebox{\textwidth}{!}{
\begin{tabular}{|c|c|c|c|c|c|c|c|c|c|c|c|}
\hline
\textbf{Model}        & \textbf{Pretr.} & \textbf{BACE}           & \textbf{BBBP}           & \textbf{HIV}            & \textbf{ClinTox}        & \textbf{MUV}            & \textbf{SIDER}          & \textbf{Tox21} & \textbf{ToxCast}        & \textbf{\begin{tabular}[c]{@{}c@{}}Avg.\\ AUROC $\uparrow$\end{tabular}} & \textbf{\begin{tabular}[c]{@{}c@{}}Avg.\\ rank $\downarrow$\end{tabular}} \\ \hline
GIN                   & No              & 70.1 $\pm$ 5.4          & 65.8 $\pm$ 4.5          & 75.3 $\pm$ 1.9          & 71.8 $\pm$ 2.5          & 58.0 $\pm$ 4.4          & 57.3 $\pm$ 1.6          & 74.0 $\pm$ 0.8 & 63.4 $\pm$ 0.6          & 67 $\pm$ 2.7                                                             & 14.4                                                                      \\ \hline
\textbf{GIN}          & Yes             & 84.5 $\pm$ 0.7          & 68.7 $\pm$ 1.3          & 79.9 $\pm$ 0.7          & 81.3 $\pm$ 2.1          & 72.6 $\pm$ 1.5          & 62.7 $\pm$ 0.8          & 78.1 $\pm$ 0.6 & 65.7 $\pm$ 0.6          & \textbf{74.2 $\pm$ 1}                                                    & \textbf{4.3}                                                              \\ \hline
GCN                   & No              & 73.6 $\pm$ 3.0          & 64.9 $\pm$ 3.0          & 75.7 $\pm$ 1.1          & 73.2 $\pm$ 1.4          & 65.8 $\pm$ 4.5          & 60.0 $\pm$ 1.0          & 74.9 $\pm$ 0.8 & 63.3 $\pm$ 0.9          & 68.9 $\pm$ 2                                                             & 12.3                                                                      \\ \hline
GCN                   & Yes             & 82.3 $\pm$ 3.4          & 70.6 $\pm$ 1.6          & 78.2 $\pm$ 0.6          & 79.4 $\pm$ 1.8          & 63.6 $\pm$ 1.7          & 62.4 $\pm$ 0.5          & 75.8 $\pm$ 0.3 & 65.3 $\pm$ 0.1          & 72.2 $\pm$ 1.3                                                           & 6.1                                                                       \\ \hline
GraphSAGE             & No              & 72.5 $\pm$ 1.9          & 69.6 $\pm$ 1.9          & 74.4 $\pm$ 0.7          & 72.7 $\pm$ 1.4          & 59.2 $\pm$ 4.4          & 60.4 $\pm$ 1.0          & 74.7 $\pm$ 0.7 & 63.3 $\pm$ 0.5          & 68.4 $\pm$ 1.6                                                           & 12.1                                                                      \\ \hline
GraphSAGE             & Yes             & 80.7 $\pm$ 0.9          & 63.9 $\pm$ 2.1          & 76.2 $\pm$ 1.1          & 78.4 $\pm$ 2.0          & 60.7 $\pm$ 2.0          & 60.7 $\pm$ 0.5          & 76.8 $\pm$ 0.3 & 64.9 $\pm$ 0.2          & 70.3 $\pm$ 1.1                                                           & 9.6                                                                       \\ \hline
GraphCL               & Yes             & 68.7 $\pm$ 7.8          & 67.5 $\pm$ 3.3          & 75.0 $\pm$ 0.4          & 78.9 $\pm$ 4.2          & 77.1 $\pm$ 1.0          & 60.1 $\pm$ 1.3          & 75.0 $\pm$ 0.3 & 62.8 $\pm$ 0.2          & 70.6 $\pm$ 2.3                                                           & 11.1                                                                      \\ \hline
JOAO                  & Yes             & 72.9 $\pm$ 2.0          & 66.0 $\pm$ 0.6          & 76.6 $\pm$ 0.5          & 66.3 $\pm$ 3.9          & 77.0 $\pm$ 2.2          & 60.7 $\pm$ 1.0          & 74.4 $\pm$ 0.7 & 62.7 $\pm$ 0.6          & 69.6 $\pm$ 1.4                                                           & 11.4                                                                      \\ \hline
\hline
D-MPNN                & No              & 80.9 $\pm$ 0.6          & 71.0 $\pm$ 0.3          & 77.1 $\pm$ 0.5          & \textbf{90.6 $\pm$ 0.6} & 78.6 $\pm$ 1.4          & 57.0 $\pm$ 0.7          & 75.9 $\pm$ 0.7 & 65.5 $\pm$ 0.3          & 74.6 $\pm$ 0.6                                                           & 5.9                                                                       \\ \hline
AttentiveFP           & No              & 78.4 $\pm$ 2.2          & 64.3 $\pm$ 1.8          & 75.7 $\pm$ 1.4          & 84.7 $\pm$ 0.3          & 76.6 $\pm$ 1.5          & 60.6 $\pm$ 3.2          & 76.1 $\pm$ 0.5 & 63.7 $\pm$ 0.2          & 72.5 $\pm$ 1.4                                                           & 9                                                                         \\ \hline
GROVER                & Yes             & 81.0 $\pm$ 1.4          & 69.5 $\pm$ 0.1          & 68.2 $\pm$ 1.1          & 76.2 $\pm$ 3.7          & 67.3 $\pm$ 1.8          & 65.4 $\pm$ 0.1          & 73.5 $\pm$ 0.1 & 65.3 $\pm$ 0.5          & 70.8 $\pm$ 1.1                                                           & 9.1                                                                       \\ \hline
GraphMVP              & Yes             & 76.8 $\pm$ 1.1          & 68.5 $\pm$ 0.2          & 74.8 $\pm$ 1.4          & 79.0 $\pm$ 2.5          & 75.0 $\pm$ 1.4          & 62.3 $\pm$ 1.6          & 74.5 $\pm$ 0.4 & 62.7 $\pm$ 0.1          & 71.7 $\pm$ 1.1                                                           & 10.3                                                                      \\ \hline
GraphMVP-C            & Yes             & 81.2 $\pm$ 0.9          & 72.4 $\pm$ 1.6          & 77.0 $\pm$ 1.2          & 77.5 $\pm$ 4.2          & 75.0 $\pm$ 1.0          & 63.9 $\pm$ 1.2          & 74.4 $\pm$ 0.2 & 63.1 $\pm$ 0.4          & 73.1 $\pm$ 1.3                                                           & 7.1                                                                       \\ \hline
\textbf{GEM}          & Yes             & \textbf{85.6 $\pm$ 1.1} & \textbf{72.4 $\pm$ 0.4} & 80.6 $\pm$ 0.9          & 90.1 $\pm$ 1.3          & \textbf{81.7 $\pm$ 0.5} & \textbf{67.2 $\pm$ 0.4} & \textbf{78.1 $\pm$ 0.1} & \textbf{69.2 $\pm$ 0.4} & \textbf{78.1 $\pm$ 0.6}                                                  & \textbf{1.3}                                                              \\ \hline
\hline
ECFP               & No              & 83.8 $\pm$ 0.4          & 68.6 $\pm$ 0.5          & 76.3 $\pm$ 0.6          & 71.7 $\pm$ 1.6          & 66.9 $\pm$ 1.3          & 67.1 $\pm$ 0.4          & 72.8 $\pm$ 0.2          & 60.4 $\pm$ 0.4          & 71 $\pm$ 0.7                                                             & 9.9                                                                       \\ \hline
\q{molecular fingerprint}    & No              & 71.5 $\pm$ 0.2          & 68.3 $\pm$ 0.3          & 65.5 $\pm$ 0.6          & 65.5 $\pm$ 0.9          & 49.8 $\pm$ 0.0          & 59.0 $\pm$ 0.2          & 63.5 $\pm$ 0.2          & 57.5 $\pm$ 0.1          & 62.6 $\pm$ 0.3                                                           & 17.3                                                                      \\ \hline
LDP                 & No              & 80.5 $\pm$ 0.3          & 63.3 $\pm$ 0.4          & 72.1 $\pm$ 0.4          & 58.2 $\pm$ 2.1          & 50.0 $\pm$ 0.9          & 59.8 $\pm$ 0.5          & 66.7 $\pm$ 0.2          & 59.5 $\pm$ 0.4          & 63.8 $\pm$ 0.3                                                           & 16                                                                        \\ \hline
LTP                 & No              & 80.7 $\pm$ 0.3          & 65.6 $\pm$ 0.3          & 73.0 $\pm$ 0.7          & 61.7 $\pm$ 1.7          & 53.2 $\pm$ 1.7          & 60.8 $\pm$ 0.5          & 67.7 $\pm$ 0.5          & 60.0 $\pm$ 0.4          & 65.3 $\pm$ 0.5                                                           & 13.8                                                                      \\ \hline

\textbf{MOLTOP}       & No              & 82.9 $\pm$ 0.2          & 68.9 $\pm$ 0.2          & \textbf{80.8 $\pm$ 0.3} & 73.6 $\pm$ 0.7          & 66.7 $\pm$ 1.9          & 66.0 $\pm$ 0.5          & 76.3 $\pm$ 0.2 & 64.4 $\pm$ 0.3          & \textbf{72.5 $\pm$ 0.5}                                                  & \textbf{6}                                                                \\ \hline
\end{tabular}
}
\end{table*}

We compared MOLTOP to $18$ other graph classification methods on MoleculeNet benchmark, with results in \cref{table_main_results}. We compare it to methods from three groups: general-purpose GNNs, GNNs designed specifically for molecular data, and graph classification baselines. This way, we verify not only that MOLTOP improves upon previous baselines, but also achieves strong performance in comparison to sophisticated, domain-specific models.

We include $8$ general-purpose GNNs: GIN, GCN and GraphSAGE from \cite{Pretraining_GNNs_Hu}, both with and without context prediction pretraining, as well as recent models based on contrastive learning, GraphCL \cite{GraphCL} and JOAO \cite{JOAO}. For GNNs designed specifically for molecular property prediction, we include multiple recent models utilizing different approaches to incorporating molecular features: D-MPNN \cite{DMPNN_Yang}, AttentiveFP \cite{AttentiveFP_Xiong}, GROVER \cite{GROVER_Rong} (large variant), GraphMVP \cite{GraphMVP} (regular and contrastive variants), and GEM \cite{GEM_Fang}. We also compare to four other baselines: purely topological LDP \cite{LDP} and LTP \cite{LTP}, purely feature-based \q{molecular fingerprint} from \cite{GNN_fair_comparison_Errica} (which uses atom counts as features), and ECFP, the molecular fingerprint commonly used as a baseline for GNNs \cite{DMPNN_Yang} (using default settings). For those baselines, we use Random Forest with 500 trees as a classifier, which follows \cite{LTP} and is a common setting in chemoinformatics.

Following best practices for statistical comparison of classifiers from \cite{Statistical_comparisons_Demsar}, we report average model rank across datasets, in addition to average AUROC. This metric is less influenced by outliers among scores, and therefore better measures how the model really performs on average. In particular, the ClinTox dataset often gives very unstable results \cite{GNN_pretraining_benchmark_Sun}, and the average rank should be less susceptible to this problem.

The main observation is that MOLTOP, under this fair comparison protocol, outperforms the majority of models on average, often by a large margin. In terms of average rank, it exceeds all GNNs without pretraining except for D-MPNN, which has almost identical average rank. It also has results better than most pretrained GNNs, even including recent, complex models like JOAO, GROVER and GraphMVP. This is particularly significant, since MOLTOP does not utilize any external knowledge like those models, nor did it require very costly pretraining on massive datasets. Our results are also notably stable, with low standard deviations, indicating the robustness of this approach.

MOLTOP does not require any pretraining, and requires only around $50$ minutes for the entire benchmark (with massively multitask ToxCast taking the majority of the time). In addition, it has very low standard deviations, indicating stable and robust behavior. This shows that fair comparison, using strong baselines, remains important even in the era of large pretrained models.

Outperforming GNNs can be explained by the global nature of features used by MOLTOP. Those models, while sophisticated, still rely on an inherently local message-passing paradigm, and especially without pretraining it is hard for them to fully understand molecular relations on limited data.

The only models that have better average rank than MOLTOP are pretrained GIN, D-MPNN, and GEM. However, using Wilcoxon signed-rank test (recommended by \cite{Statistical_comparisons_Demsar}) with $\alpha=0.05$, we determined that difference with GIN and D-MPNN performance is not statistically significant (p-values $0.547$ and $0.742$). Only GEM outperforms MOLTOP significantly (p-value $0.016$), but we note that it has an enormous computational cost, including fine-tuning and even inference, since it requires generation of multiple conformers and Merck molecular force field (MMFF) optimization. Those operations can easily take minutes per molecule, can often fail for molecules with complicated geometries (e.g. highly rotatable bonds), and are simply impossible in many cases, e.g. for compounds with disconnected components like salts. Therefore, MOLTOP always achieves results better or as good as GNNs, except for GEM, which has major practical downsides.

MOLTOP also improves upon other baselines by a large margin. Previous approaches like LDP, LTP and \q{molecular fingerprint} of \cite{GNN_fair_comparison_Errica} often fail to beat almost any GNNs, and thus are unsuitable for molecular data. Notably, we even outperform ECFP4 fingerprint, often used to compare again GNNs. This shows that improving upon existing baselines remains important for fair comparison.

We present additional comparisons with graph kernels in the supplementary material. We omit them here, because due to OOM errors they couldn't be computed on HIV and MUV datasets, meaning that we can't directly compare their average AUROC and rank to other models in \cref{table_main_results}.

\subsection{HIV leaderboard results}

We further evaluate MOLTOP on the HIV dataset featured in OGB leaderboard \cite{OGB_Hu}, comparing it to various cutting-edge models that do not provide results on the whole MoleculeNet benchmark. The results are shown in \cref{table_ogb_results}. While this is the same HIV dataset as used before, here we are not allowed to use the validation data (due to leaderboard rules), even when no hyperparameters are tuned.

\begin{table}[b]
\vspace{0.2cm}
\caption{Selected results on HIV leaderboard in OGB. MOLTOP results are marked in bold.}
\resizebox{\columnwidth}{!}{
\begin{tabular}{|c|c|c|c|}
\hline
Method          & Rank $\downarrow$ & Test AUROC $\uparrow$     & Valid AUROC $\uparrow$    \\ \hline
CIN             & 9                 & 80.94$\pm$0.57          & 82.77$\pm$0.99          \\ \hline
GSAT            & 10                 & 80.67$\pm$0.95          & 83.47$\pm$0.31          \\ \hline
Graphormer      & 14                & 80.51$\pm$0.53          & 83.10$\pm$0.89          \\ \hline
\textbf{MOLTOP} & \textbf{13}       & \textbf{80.42$\pm$0.25} & \textbf{80.33$\pm$0.54} \\ \hline
P-WL            & 15                & 80.39$\pm$0.40          & 82.79$\pm$0.59          \\ \hline
Directional GSN & 15                & 80.39$\pm$0.90          & 84.73$\pm$0.96          \\ \hline
PNA             & 18                & 79.05$\pm$1.32          & 85.19$\pm$0.99          \\ \hline
DeeperGCN       & 21                & 78.58$\pm$1.17          & 84.27$\pm$0.63          \\ \hline
\end{tabular}
}
\label{table_ogb_results}
\end{table}

Since there are currently $34$ models on the leaderboard, here we present a few selected ones. MOLTOP achieves $14$-th rank, outperforming well-known PNA \cite{PNA_Corso} and DeeperGCN \cite{DeeperGCN_Guohao}, and coming very close to Graphormer \cite{Graphormer_Ying}, GSAT \cite{GSAT_Miao} and CIN \cite{CIN_Bodnar}. It is also narrowly better than very powerful Directional GSN \cite{GSN_Bouritsas}. If we lift the limitation of not using the validation set, which is quite artificial for hyperparameter-free MOLTOP, it gets $80.8\%$ AUROC and outperforms both GSAT and Graphormer.

\subsection{Peptides classification}

In order to further evaluate the out-of-distribution generalization abilities of MOLTOP, we utilize the peptides-func dataset from LRGB benchmark \cite{LRGB_peptides_func}, concerning peptide function classification. The characteristics of this data are very different from MoleculeNet, with peptides being much bigger molecules, with larger diameter and long-range dependencies. We do not perform any tuning, requiring the hyperparameter-free baseline to perform reasonably well even on this very different domain.

In \cref{table_peptides_func_results}, we compare to the results from \cite{LRGB_peptides_func}. Remarkably, MOLTOP outperforms all GNNs, including graph transformers specifically designed for this task, e.g. SAN with RWSE embeddings. At the same time, it is much more stable, with very low standard deviation. This shows that it indeed works very well as a baseline, even for novel datasets and molecular domains.

\begin{table}[]
\vspace{0.2cm}
\caption{Results on peptides-func dataset. Best result is bolded.}
\centering
\begin{tabular}{|c|c|}
\hline
Method            & Test average precision $\uparrow$    \\ \hline
GCN               & 59.30 $\pm$ 0.23          \\ \hline
GCNII             & 55.43 $\pm$ 0.78          \\ \hline
GINE              & 54.98 $\pm$ 0.79          \\ \hline
GatedGCN          & 58.64 $\pm$ 0.77          \\ \hline
GatedGCN+RWSE     & 60.69 $\pm$ 0.35          \\ \hline
Transformer+LapPE & 63.26 $\pm$ 1.26          \\ \hline
SAN+LapPE         & 63.84 $\pm$ 1.21          \\ \hline
SAN+RWSE          & 64.39 $\pm$ 0.75          \\ \hline
MOLTOP            & \textbf{64.59 $\pm$ 0.05} \\ \hline
\end{tabular}
\label{table_peptides_func_results}
\vspace{-0.5cm}
\end{table}

\subsection{Time efficiency benchmark}

While MOLTOP has very low feature extraction complexity, as outlined in \cref{subsection_complexity_analysis}, we also measure wall time for both feature extraction and RF training, summarized in \cref{table_times}. On four smallest datasets, it requires less than ten seconds for both, and at most about a minute for a further three datasets. In particular, feature extraction takes only $35$ seconds on over $15$ thousands of peptides, which are very large molecules by the standard of molecular graph classification, which mostly concerns small, drug-like compounds. On MUV, which is by far the largest in terms of the number of molecules, feature extraction still takes only about $2.5$ minutes. In general, we note that MOLTOP feature extraction is embarrassingly parallel, and can process almost arbitrary number of molecules, given enough CPUs.

\begin{table}[]
\vspace{0.2cm}
\caption{MOLTOP timings.}
\centering
\resizebox{\columnwidth}{!}{
\begin{tabular}{|c|c|c|c|c|}
\hline
Dataset       & \# molecules & \# tasks & \begin{tabular}[c]{@{}c@{}}Feature\\ extraction {[}s{]}\end{tabular} & Training {[}s{]} \\ \hline
BACE          & 1513         & 1        & 2                                                                    & 1                \\ \hline
BBBP          & 2039         & 1        & 3                                                                    & 1                \\ \hline
HIV           & 41127        & 1        & 57                                                                   & 6                \\ \hline
ClinTox       & 1478         & 17       & 2                                                                    & 1                \\ \hline
MUV           & 93087        & 2        & 144                                                                  & 38               \\ \hline
SIDER         & 1427         & 27       & 2                                                                    & 2                \\ \hline
Tox21         & 7831         & 12       & 11                                                                   & 4                \\ \hline
ToxCast       & 8575         & 617      & 12                                                                   & 232              \\ \hline
Peptides-func & 15535        & 10       & 35                                                                   & 5                \\ \hline
\end{tabular}
}
\label{table_times}
\end{table}

Overall, the ToxCast takes the most time, but for the training part, due to being massively multitask. This is most likely the artifact of the implementation, since such dataset are rare and Scikit-learn implementation of RF is not particularly optimized for those cases. In fact, this is the only dataset in molecular property prediction that we are aware of with such huge number of tasks.

For comparison with GNNs, we focus on peptides-func dataset, for which \cite{LRGB_peptides_func} provides wall times. Computing LapPE or RWSE embeddings alone, which are necessary for feature augmentation to get reasonable performance of GNNs on this dataset (due to long-range dependencies), takes about a minute. This does not even take into consideration the training time of GNN model itself.

Finally, since MOLTOP is hyperparameter-free, it does not require time for tuning for new datasets. This is especially advantageous in comparison to GNNs, which require extensive tuning of at least learning rate and regularization parameters for new datasets. This increases their training cost multiple times, while MOLTOP can just be used as-is.

\subsection{Feature importance analysis}
\label{section_feature_importance_analysis}

We additionally validate the importance of features leveraging Random Forest average decrease in entropy. This metric is effective in identifying the features that are most useful for the model. The importance of a feature is the sum of importances of its histograms bins, since each bin is treated as a separate feature for the classifier. Next, we average values obtained from $10$ classifiers on each dataset, based on different random seeds. To aggregate this information for the entire benchmark, we further average the importances for all $8$ datasets. This is shown in \cref{plot_averave_feature_importances}.

\begin{figure}[]
\centering
\includegraphics[width=\columnwidth]{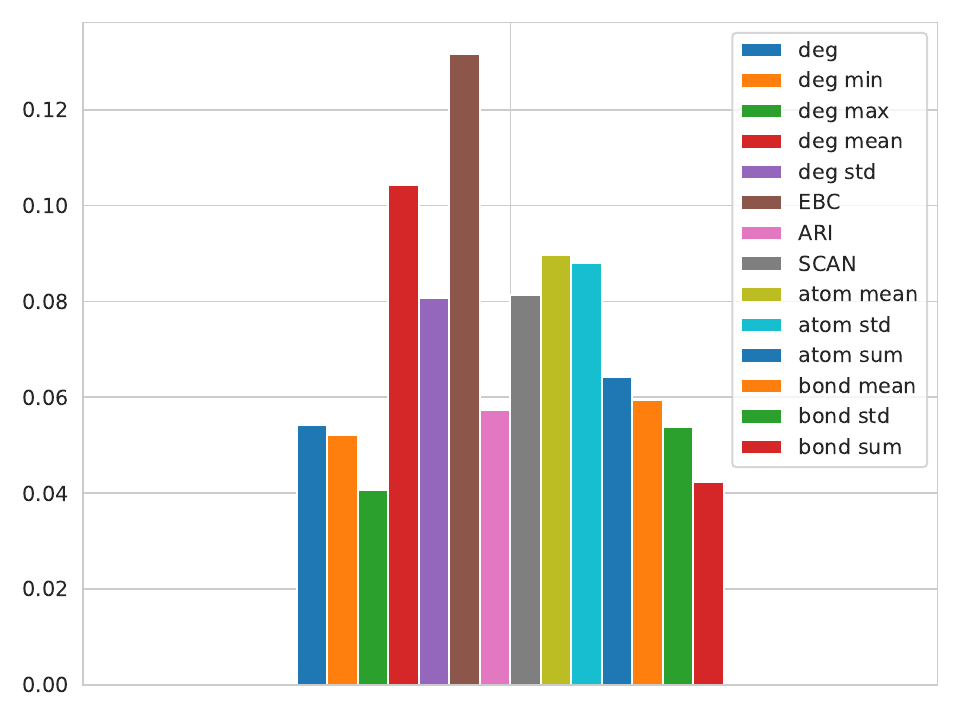}
\caption{Average MOLTOP feature importances.}
\label{plot_averave_feature_importances}
\vskip 0.2in
\end{figure}

The main outcome is that all features are useful, as there are none with very low importance. The most influential feature is EBC, which highlights that global information is particularly important for molecular data, and validates our initial belief. The least useful feature is the maximal degree of neighbors, which is expected, as node degrees are typically very low in chemistry. For additional ablation studies, see the supplementary material.

\subsection{Expressivity experiments}
\label{section_expressivity_experiments}

Lastly, we analyzed the expressive power of MOLTOP topological features in distinguishing graphs. It is typically represented using a hierarchy of $k$-dimensional Weisfeiler-Lehman isomorphism tests \cite{Weisfeiler_Lehman,GNN_expressiveness_Sato}, but can also be verified by using particular classes of graphs, which are known to be hard to distinguish for computational methods. Typical GNNs are at most as powerful as 1-WL test \cite{GIN}, but with specific extensions, often utilizing topological descriptors in forms of shortest paths or subgraphs counting, GNNs become more powerful \cite{Graphormer_Ying,GSN_Bouritsas}.

We verified the discriminative power of MOLTOP feature vectors using \textit{graph8c} and \textit{sr25} datasets \cite{GNN_expressivity_testing_Balcilar}. Our topological features are all integers after histogram aggregation, so we deem two graphs to be different if they have different values of any feature. We perform paired comparisons of graphs this way, where the number of pairs is $61$M for \textit{graph8c} and $105$ for \textit{sr25}. We report number of errors, i.e. undistinguished pairs, in \cref{table_graph_distinguishing_errors}.

MOLTOP achieves very good results, showing high power in distinguishing graphs. It outperforms all message-passing GNNs, probably due to usage of features that incorporate more global information. Additionally, it performs almost as well as PPGN \cite{PPGN_Maron} and \mbox{GNNML3} \cite{GNN_expressivity_testing_Balcilar}, which are provably as powerful as $2$-FWL test, equivalent to $3$-WL test. It also achieves perfect result on \textit{sr25}, which consists of strongly regular graphs. This is particularly exceptional, as they are $3$-WL equivalent \cite{3_WL_properties_Arvind}, which means that MOLTOP can distinguish graphs for which even $3$-WL test fails.

We provide examples of graphs distinguishable by MOLTOP, but not e.g. by 1-WL test, in the supplementary material.

\begin{table}[!htb]
\caption{The number of undistinguished pairs of graphs in \textit{graph8c} and \textit{sr25}.}
\centering
\begin{tabular}{|c|c|c|}
\hline
Model   & graph8c $\downarrow$ & sr25 $\downarrow$ \\ \hline
MLP     & 293K                 & 105               \\ \hline
GCN     & 4775                 & 105               \\ \hline
GAT     & 1828                 & 105               \\ \hline
GIN     & 386                  & 105               \\ \hline
ChebNet & 44                   & 105               \\ \hline
PPGN    & \textbf{0}           & 105               \\ \hline
GNNML1  & 333                  & 105               \\ \hline
GNNML3  & \textbf{0}           & 105               \\ \hline
MOLTOP  & 3                    & \textbf{0}        \\ \hline
\end{tabular}
\label{table_graph_distinguishing_errors}
\vspace{-0.5cm}
\end{table}

\section{Conclusion}
\label{conclusion}

We presented a new type of molecular graph embedding, which leverages local and global structural information aggregated from vertex and edge descriptors, as well as basic semantics of bonds and atoms. Combined with low parameter classification using Random Forests, it forms a robust baseline algorithm for molecular property prediction called MOLTOP. The key advantages of MOLTOP are: low computational cost, no hyperparameter tuning required, and high discriminative power, which surpasses 1-WL isomorphism test. Based on fair evaluation protocols and deterministic scaffold splits, we show that MOLTOP is surprisingly competitive with GNNs, including out-of-generalization applications to new datasets. With additional verification of results using Wilcoxon signed-rank test, we show that our proposed model is better or as good as all baselines and GNNs, except for GEM model, which uses computationally expensive and error-prone 3D molecular modelling.

In the future work, we plan to experiment with incorporating additional features, and adapt this approach to e.g. materials chemistry. We also want to more thoroughly analyze the theoretical aspects of feature descriptors and their discriminative abilities in terms of WL hierarchy.

We conclude that strong baselines, such as MOLTOP, are still important to gain deep insights into advances of GNN pre-training and assessing benefits of incorporating spatial or structural information, especially in the experimental setups with limited computational budgets.



\begin{ack}

Research was supported by the funds assigned by Polish Ministry of Science and Higher Education to AGH University of Krakow, and by the grant from \q{Excellence Initiative - Research University} (IDUB) for the AGH University of Krakow. We gratefully acknowledge Poland's high-performance Infrastructure PLGrid ACK Cyfronet AGH for providing computer facilities and support within computational grant. We would like to thank Alexandra Elbakyan for her work and support for accessibility of science.

\end{ack}



\bibliography{bibliography}

\clearpage
\appendix
\onecolumn


\begin{center}
  \huge \textbf{Supplementary information} \par
\end{center}

\section{Descriptors histograms examples}

Here, we visualize the discriminative power of proposed topological descriptors on two example molecules (\cref{figure_dpcc_paclitaxel}): Dipalmitoylphosphatidylcholine (DPCC), a phospholipid used as a pulmonary surfactant, and Paclitaxel, used in cancer treatment. Histograms of EBC, ARI and SCAN (with $5$ bins and normalized for readability) are presented in \cref{hist_EBC_ARI_SCAN}. It should be noted that those distributions follow chemical intuitions outlined in the Methods section in the main paper.

\begin{figure}[h!]
  \centering
\subfloat{\includegraphics[width=0.45\columnwidth]{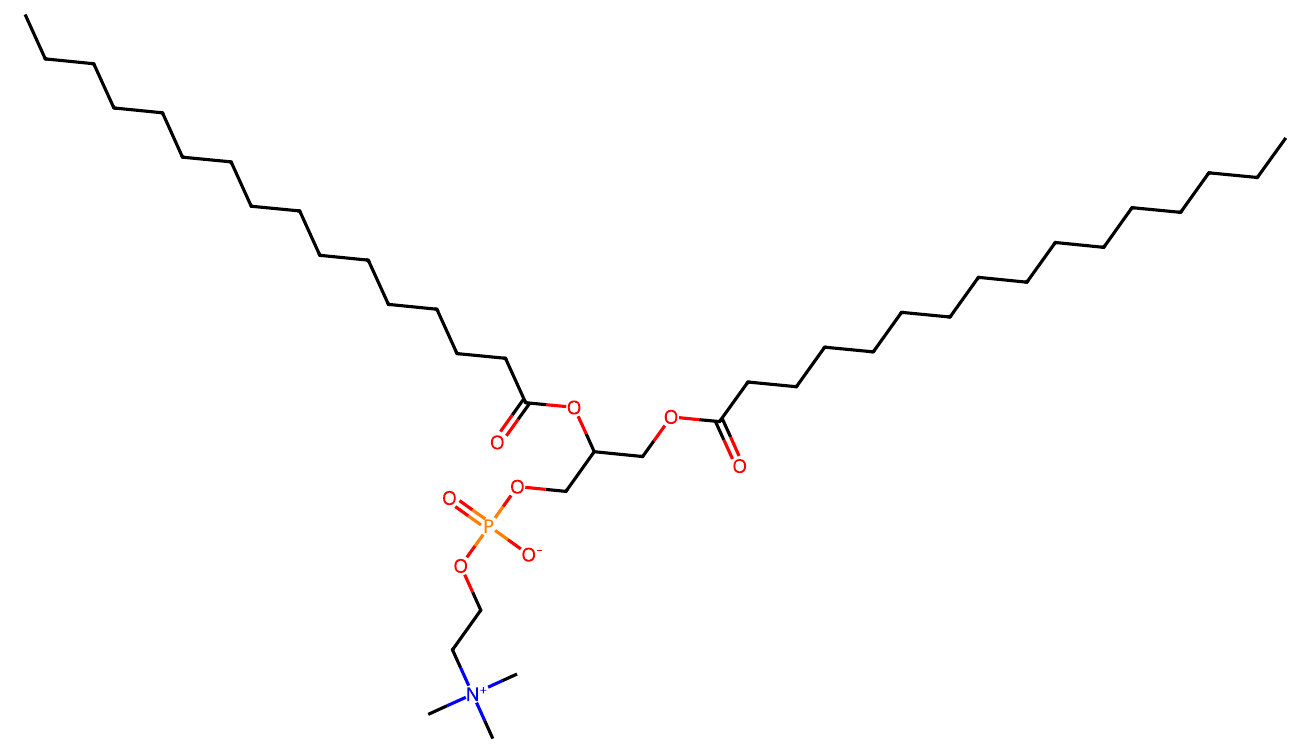}}
\subfloat{\includegraphics[width=0.45\columnwidth]{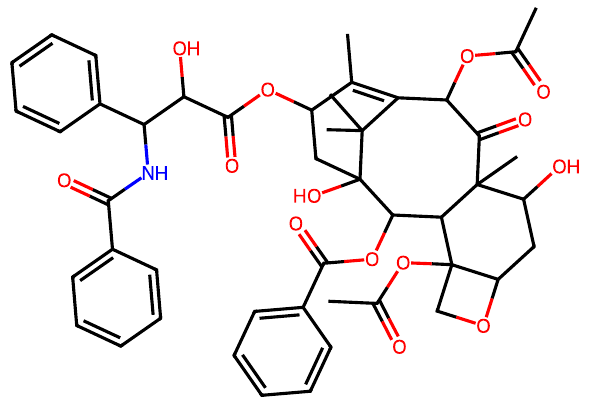}}
\caption{DPCC and Paclitaxel molecules.}
\label{figure_dpcc_paclitaxel}
\end{figure}

\vskip 1in

\begin{figure}[h!]
  \centering
\subfloat{\includegraphics[width=0.33\columnwidth]{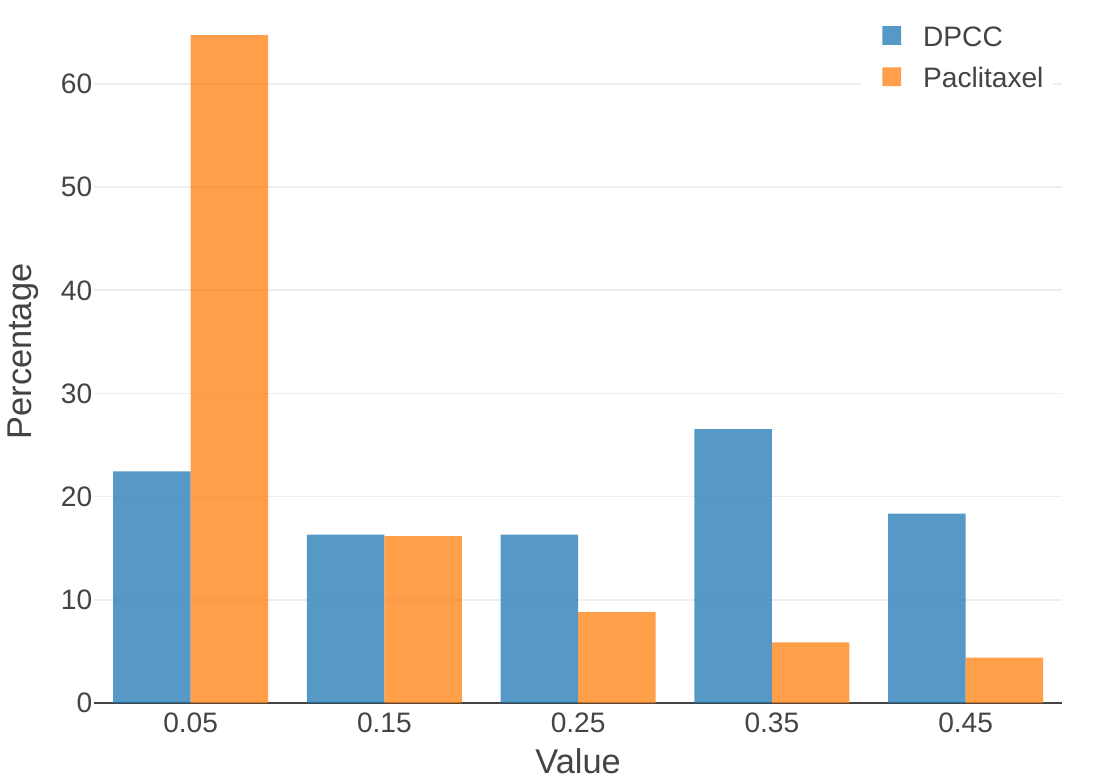}}
\subfloat{\includegraphics[width=0.33\columnwidth]{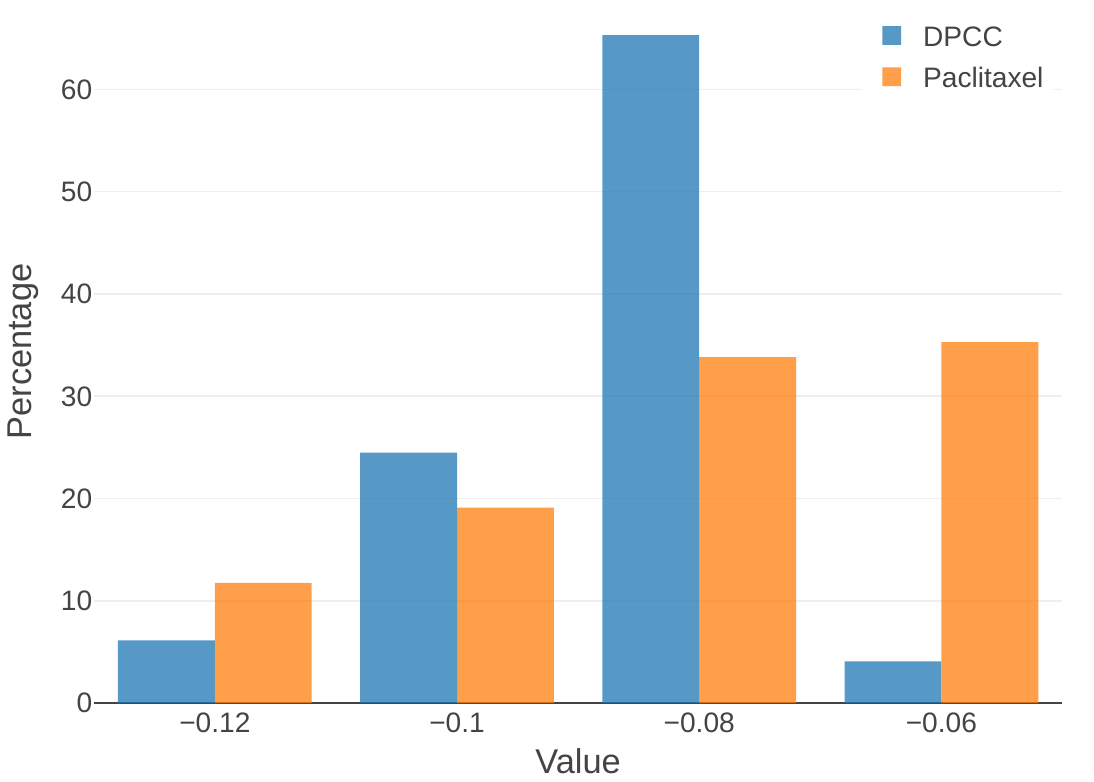}}
\subfloat{\includegraphics[width=0.33\columnwidth]{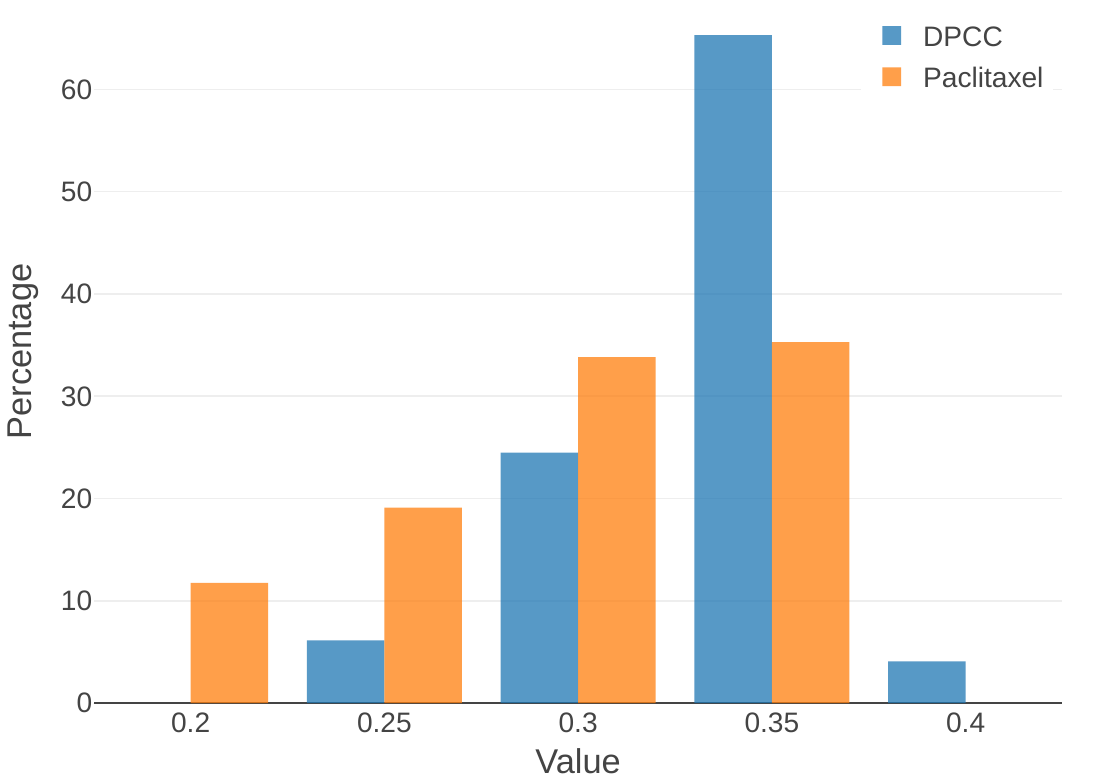}}
    \caption{Normalized histograms of edge descriptors for DPCC and Paclitaxel: (a) EBC (b) ARI (c) SCAN.}
    \label{hist_EBC_ARI_SCAN}
\end{figure}

\clearpage
\section{Datasets descriptions}
\label{appendix_datasets_descriptions}

Here, we present a short description of datasets from MoleculeNet, as well as peptides-func from LRGB \cite{LRGB_peptides_func}, including their basic properties. The statistics are summarized in \cref{table_appendix_datasets_statistics}, and we describe all datasets below. Additionally, in \cref{figure_molevules_hiv_toxcast} we present distributions of molecules sizes for HIV and ToxCast datasets. The distributions for the rest of the datasets are very similar, so we omit them for brevity.
\begin{itemize}
    \item \textbf{BACE} \cite{BACE} - binary prediction of binding results for a set of inhibitors of human $\beta$-secretase 1 (BACE-1).
    \item \textbf{BBBP} \cite{BBBP} - prediction whether a compound is able to penetrate the blood-brain barrier.
    \item \textbf{HIV} \cite{HIV} - prediction whether the molecule can inhibit the HIV replication.
    \item \textbf{ClinTox} \cite{ClinTox} - database of drugs approved and rejected by FDA for toxicity reasons. Two tasks concern prediction of drug toxicity during clinical trials and during FDA approval process.
    \item \textbf{MUV} \cite{MUV} - the Maximum Unbiased Validation (MUV) has been designed for validation of virtual screening techniques, consisting of $17$ tasks based on PubChem BioAssay combined with a refined nearest neighbor analysis.
    \item \textbf{SIDER} \cite{SIDER} - the Side Effect Resource database, considering prediction of adverse side effects of drugs on $27$ system organ classes.
    \item \textbf{Tox21} \cite{Tox21} - coming from $2014$ Tox21 Data Challenge, this dataset concerns prediction of 12 toxicity targets.
    \item \textbf{ToxCast} \cite{ToxCast} - toxicology measurements for $617$ targets from a large scale in vitro high-throughput screening.
    \item \textbf{peptides-func} \cite{LRGB_peptides_func} - functions of peptides (small proteins), based on SATPdb data.
\end{itemize}

\begin{table}[h!]
\caption{MoleculeNet datasets statistics.}
\centering
\begin{tabular}{|c|c|c|c|}
\hline
Dataset & \# molecules & \# tasks & Median molecule size \\ \hline
BACE    & 1513         & 1        & 33                   \\ \hline
BBBP    & 2039         & 1        & 23                   \\ \hline
HIV     & 41127        & 1        & 23                   \\ \hline
ClinTox & 1478         & 2        & 23                   \\ \hline
MUV     & 93087        & 17       & 24                   \\ \hline
SIDER   & 1427         & 27       & 25                   \\ \hline
Tox21   & 7831         & 12       & 16                   \\ \hline
ToxCast & 8575         & 617      & 16                   \\ \hline
\end{tabular}
\label{table_appendix_datasets_statistics}
\end{table}

\begin{figure}[ht!]
\centering
\begin{tabular}{cc}
\includegraphics[width=0.40\columnwidth]{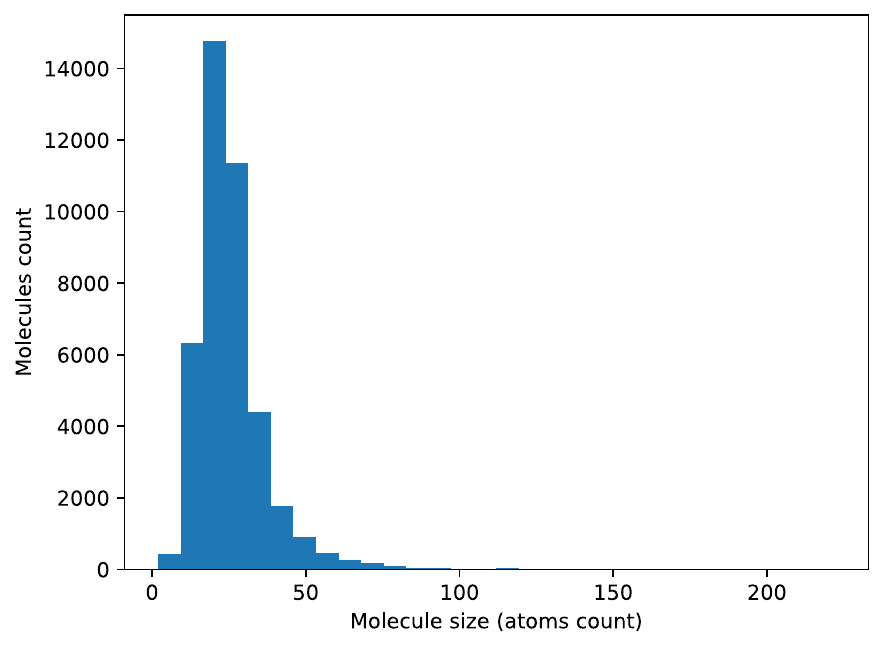}
&
\includegraphics[width=0.40\columnwidth]{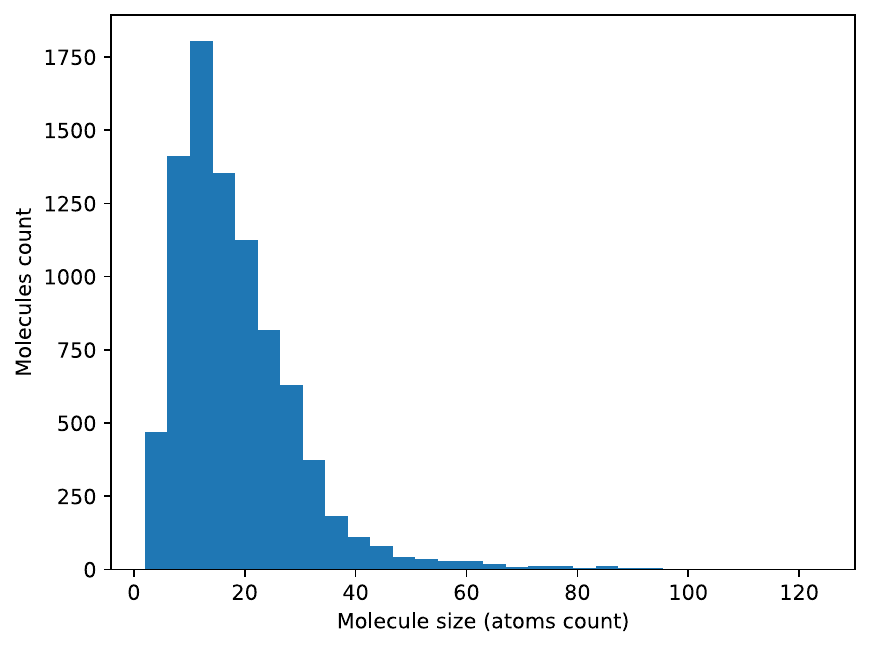} \\
(a) & (b)
\end{tabular}
\caption{Molecule sizes distribution: (a) HIV dataset, (b) ToxCast dataset.}
\label{figure_molevules_hiv_toxcast}
\end{figure}

\clearpage

\section{Feature extraction pipeline visualization}
\label{appendix_feature_extraction_pipeline_visualization}

Here, we present a plot of feature extraction pipeline of MOLTOP, i.e. extracted features and their aggregation. We recall that $\mathrm{deg}$ means degree of a node, $DN$ is the multiset of degrees of neighbors, EBC is Edge Betweenness Centrality, ARI is Adjusted Rand Index, and SCAN is the SCAN Structural Similarity score. We aggregate topological features with histograms, resulting in integer features. Depending on the feature, we use either $11$ bins or the number of bins equal to the median size of the molecules in the training set. For each of the $90$ atom types (atomic numbers) and $5$ bond types, we compute the sum, mean and standard deviation in the molecule. All features are finally concatenated into the full MOLTOP feature vector.

\begin{figure}[ht!]
\centering
\begin{tabular}{cc}
\includegraphics[width=\columnwidth]{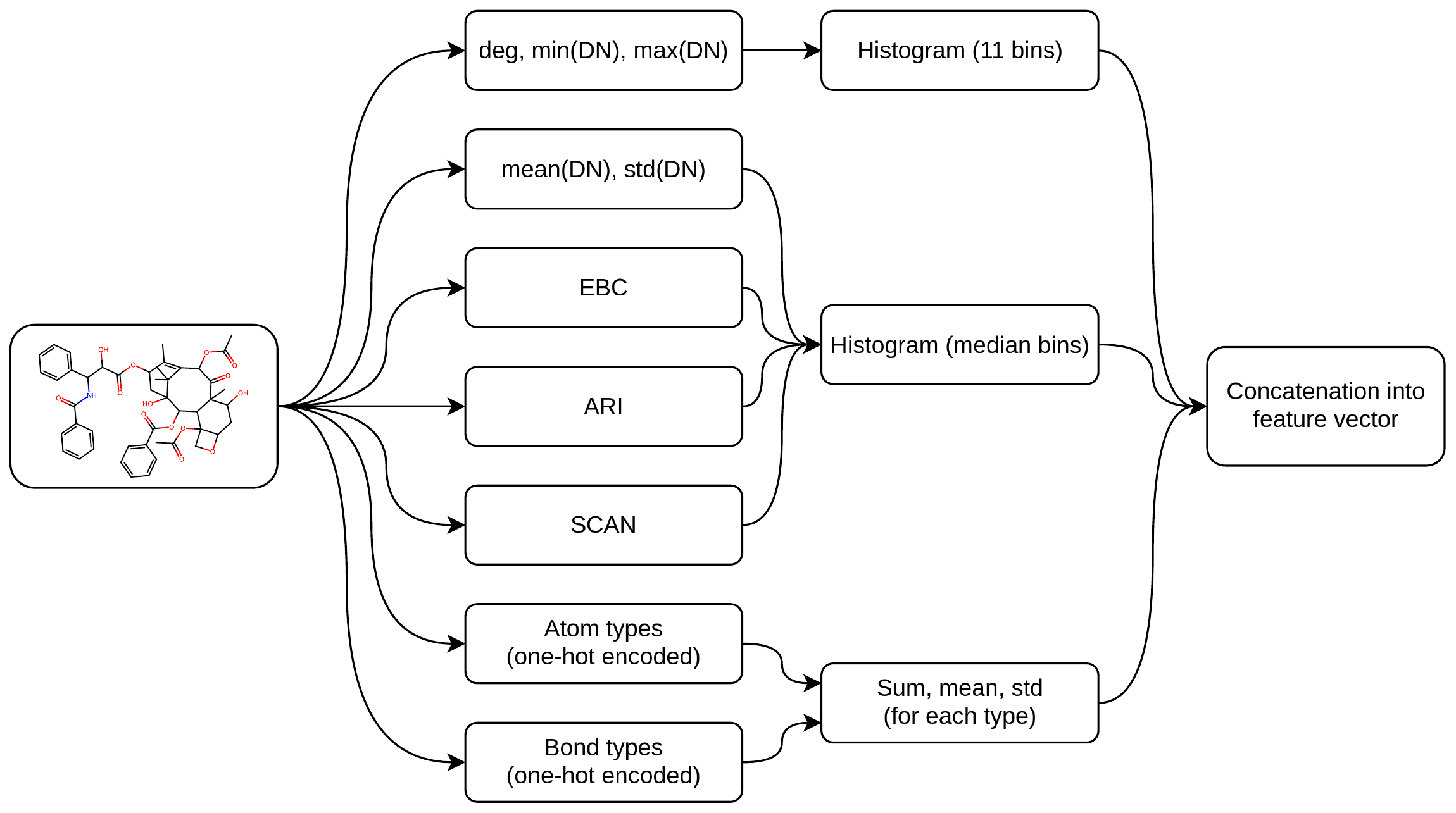}
\end{tabular}
\caption{Feature extraction scheme in MOLTOP.}
\label{figure_decaprismane_dodecahedrane}
\end{figure}

\clearpage

\section{Computational complexity of Adjusted Rand Index and SCAN scores}
\label{appendix_computational_complexity}

We present the derivation of the computational complexity for Adjusted Rand Index (ARI) and SCAN Structural Similarity scores for the edges in the graph. We denote the highest vertex degree as $k$. We assume the adjacency sets representation of a graph $G = (V, E)$, and amortized complexity of checking existence of element in a set as $O(1)$.

\begin{theorem}
The computational complexity for computing Adjusted Rand Index (ARI) for all existing edges in the graph is $O(k |E|)$, with the worst case complexity $O(|V||E|)$, which occurs for full graphs.
\end{theorem}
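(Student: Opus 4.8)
The plan is to analyze the cost on a per-edge basis and then sum over all edges. Fix an edge $e = \{u, v\}$. The first step is to observe that, among the four contingency-table quantities $a, b, c, d$ appearing in the ARI formula, only $a = |\mathcal{N}(u) \cap \mathcal{N}(v)|$ (the number of common neighbors) requires inspecting the neighborhoods. Once $a$ is known, the remaining entries follow from $a$, the degrees $\mathrm{deg}(u)$, $\mathrm{deg}(v)$, and $|V|$ by elementary arithmetic: $b$ and $c$ are essentially $\mathrm{deg}(u) - a$ and $\mathrm{deg}(v) - a$ (up to constant corrections for whether $u$ and $v$ themselves are counted among the ``other vertices''), and $d$ is obtained as the complement $|V| - a - b - c - O(1)$. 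Hence, given $a$, the full numerator $2(ad - bc)$ and denominator $(a+b)(b+d) + (a+c)(c+d)$ of $ARI(u,v)$ are evaluated in $O(1)$ time.

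The second step is to bound the cost of computing $a$. Under the assumed adjacency-set representation with $O(1)$ amortized membership testing, I would iterate over the adjacency set of whichever of $u, v$ has the smaller degree and test each element for membership in the other's adjacency set, accumulating the count of common neighbors. This costs $O(\min(\mathrm{deg}(u), \mathrm{deg}(v)))$, which is bounded by $O(k)$ since every degree is at most $k$. Combining with the $O(1)$ arithmetic from the first step, each edge is processed in $O(k)$ time, and summing over all edges gives a total of $\sum_{e \in E} O(k) = O(k|E|)$.

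Finally, I would specialize to the worst case. The bound $O(k|E|)$ is maximized when $k$ is as large as possible, i.e. $k = |V| - 1$, which is attained exactly by the complete graph $K_{|V|}$, where every vertex is adjacent to all others. Substituting $k = |V| - 1 = O(|V|)$ yields the stated worst-case complexity $O(|V||E|)$.

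I expect the only delicate point to be the exact contingency-table bookkeeping in the first step — in particular, keeping track of whether $u$ and $v$ themselves are excluded from the pool of ``other vertices'' so that $b$, $c$, $d$ are expressed correctly in terms of $a$, the degrees, and $|V|$. This is a constant-time concern and does not affect the asymptotics, so the essential content of the proof is the $O(k)$ set-intersection argument of the second step. A tighter bound of $\sum_{e=\{u,v\}} O(\min(\mathrm{deg}(u),\mathrm{deg}(v)))$ is available, but the crude per-edge bound $O(k)$ already suffices for the claim.
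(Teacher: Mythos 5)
Your proposal is correct and follows essentially the same route as the paper: bound the per-edge cost by $O(k)$ via neighborhood set operations, sum over the $|E|$ edges to get $O(k|E|)$, and specialize to complete graphs (where $k = \Theta(|V|)$) for the $O(|V||E|)$ worst case. Your only deviation is a minor refinement — computing just $a$ by set intersection and deriving $b$, $c$, $d$ arithmetically, where the paper evaluates $a$, $b$, $c$ each as $O(k)$ set operations — which does not change the structure or the asymptotics of the argument.
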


\begin{proof} 
The formula for computing ARI for a single edge $e = \{u, v\}$ is:
\begin{equation}
ARI(u, v) = \frac{2(ad - bc)}{(a + b)(b + d) + (a + c)(c + d)},
\end{equation}
It includes computing four values. $a$ is the number of edges to other vertices that $u$ and $v$ have in common, which reduces to set union, with complexity $O(k+k) = O(k)$; $b$ is the number of edges to other nodes for $u$ that $v$ does not have, which reduces to set intersection, with complexity $O(k)$; $c$ is the number of edges to other nodes for $v$ that $u$ does not have, and it also has complexity $O(k)$; $d$ is the number of edges to other nodes that neither $u$ nor $v$ has, and it reduces to computing the difference between total number of vertices $|V|$ and size of neighborhoods' union, which is already calculated for $a$, therefore it is simply the difference of two integers, with complexity $O(1)$. Total complexity for a single edge is, therefore, $O(k + k + k + 1) = O(k)$.

Computing ARI for all edges requires evaluating the expression above for $|E|$ edges. Therefore, the total complexity is $O(k |E|)$. For full graphs, for which all vertices have degree $|V|$, and therefore $k = |V|$, the complexity becomes $O(|V||E|)$.
\end{proof}

\begin{theorem}
The computational complexity for computing SCAN Structural Similarity scores for all existing edges in the graph has complexity $O(k |E|)$, with worst case complexity $O(|V||E|)$, which occurs for full graphs.
\end{theorem}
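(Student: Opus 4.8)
The plan is to mirror the edge-by-edge accounting used in the preceding ARI proof, exploiting the fact that SCAN requires even fewer set operations. First I would fix a single edge $e = \{u, v\}$ and isolate the only non-trivial quantity in the SCAN formula, namely the common-neighbor count $|\mathcal{N}(u) \cap \mathcal{N}(v)|$. Under the assumed adjacency-set representation with $O(1)$ amortized membership testing, this intersection can be evaluated by iterating over the smaller of the two neighbor sets and checking membership in the larger, at cost $O(\min(\mathrm{deg}(u), \mathrm{deg}(v))) = O(k)$, where $k$ denotes the maximum vertex degree.

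Next I would argue that every remaining ingredient of $SCAN(u,v)$ is constant-time. The degrees $\mathrm{deg}(u)$ and $\mathrm{deg}(v)$ are available in $O(1)$ when stored alongside the adjacency sets, and the subsequent arithmetic — two increments, one product, one square root, and one division — is $O(1)$. Summing the contributions gives a per-edge cost of $O(k + 1) = O(k)$. Since the score is computed independently for each of the $|E|$ edges, the total cost is $O(k|E|)$. Substituting the worst case $k = |V|$, attained precisely when every vertex has degree $|V|-1$, i.e. for full graphs, collapses this bound to $O(|V||E|)$, completing the argument.

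As for the main obstacle, there is essentially none beyond the bookkeeping, since SCAN is structurally simpler than ARI: it needs only the single neighborhood intersection rather than the union, intersection, and complement quantities $a, b, c, d$. The one point requiring care is the same one underlying the ARI analysis — justifying that the intersection term dominates and that the degree lookups are genuinely $O(1)$ under the stated data structure — so that the per-edge bound is $O(k)$ rather than anything larger.
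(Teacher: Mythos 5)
Your proposal is correct and follows essentially the same route as the paper's proof: bound the per-edge cost by the $O(k)$ neighborhood intersection (the paper obtains the degrees during that same iteration, while you note they are $O(1)$ lookups from the adjacency-set sizes — an immaterial difference), multiply by $|E|$, and instantiate $k = |V|$ for full graphs. Your observation that iterating over the smaller neighbor set gives $O(\min(\mathrm{deg}(u), \mathrm{deg}(v)))$ is a slight refinement, but it changes nothing in the stated bound.
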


\begin{proof}
The formula for computing SCAN score for a single edge $e = \{u, v\}$ is:
\begin{equation}
SCAN(u, v) = \frac{|\mathcal{N}(u) \cap \mathcal{N}(v)| + 1}{\sqrt{(\mathrm{deg}(u) + 1)(\mathrm{deg}(v) + 1)}}
\end{equation}
Computing size of neighborhoods' intersection reduces to computing size of set intersection, which is $O(k)$. Degrees of vertices can be computed during the iteration needed for computing set intersection. This way, total complexity for a single edge is $O(k)$.

Computing SCAN scores for all edges requires evaluating the expression above for $|E|$ edges. Therefore, the total complexity is $O(k |E|)$. For full graphs, for which all vertices have degree $|V|$, and therefore $k = |V|$, the complexity becomes $O(|V||E|)$.
\end{proof}

We note that molecular graphs are very sparse, i.e. $|E| << |V|^2$. In particular, the number of bonds very rarely exceeds $10$, especially in medicinal chemistry. Therefore, we can treat $k$ as a constant, and this way the expected complexity reduces to $O(|E|)$ for this kind of graphs.

Alternatively, triangle counting algorithms can be used for computing neighborhood intersections. They typically have complexity $O(a(G) |E|)$, where $a(G)$ is the arboricity of the graph \cite{Triangle_counting_Ortmann}. This is particularly useful for molecular graphs, since almost all known molecules (with some exceptions, e.g. for crystals) are planar \cite{Molecules_planar_Simon}, and planar graphs have arboricity at most 3 \cite{Planar_graph_arboricity}. Utilizing this constant, the complexity reduces to $O(|E|)$, the same as for neighborhood intersection-based algorithms.

\clearpage

\section{Detailed results of model improvements}
\label{appendix_detailed_improvements}

Here, we present the more detailed results of proposed model improvements in \cref{table_appendix_detailed_improvement_results}. We report test AUROC, i.e. mean and standard deviation across $10$ runs, for all datasets.

\begin{table}[h!]
\caption{The results of proposed model improvements (extended version).}
\centering
\resizebox{\textwidth}{!}{
\begin{tabular}{|c|c|c|c|c|c|c|c|c|c|}
\hline
Model                                                                              & BACE         & BBBP         & HIV          & ClinTox      & MUV          & SIDER        & Tox21        & ToxCast      & \begin{tabular}[c]{@{}c@{}}Avg.\\ AUROC $\uparrow$\end{tabular} \\ \hline
LDP, 50 bins                                                                       & 80.5$\pm$0.3 & 63.3$\pm$0.4 & 72.1$\pm$0.4 & 50.0$\pm$0.9 & 58.2$\pm$2.1 & 59.8$\pm$0.5 & 66.7$\pm$0.2 & 59.5$\pm$0.4 & 63.8$\pm$0.7                                                    \\ \hline
\begin{tabular}[c]{@{}c@{}}Adding topological\\ edge features\end{tabular}                 & 81.3$\pm$0.4 & 65.0$\pm$0.5 & 74.9$\pm$0.8 & 52.4$\pm$2.5 & 61.3$\pm$1.4 & 59.8$\pm$0.5 & 68.7$\pm$0.3 & 60.3$\pm$0.5 & 65.6$\pm$0.9                                                    \\ \hline
\begin{tabular}[c]{@{}c@{}}Adding atoms and bonds\\ features\end{tabular}             & 81.3$\pm$0.4 & 68.4$\pm$0.6 & 78.5$\pm$0.6 & 54.3$\pm$1.9 & 65.9$\pm$1.9 & 64.7$\pm$0.5 & 75.4$\pm$0.4 & 63.5$\pm$0.3 & 69.0$\pm$0.8                                                    \\ \hline
Median bins                                                                        & 81.1$\pm$0.4 & 68.8$\pm$0.5 & 79.0$\pm$0.7 & 54.8$\pm$2.6 & 67.6$\pm$1.4 & 64.8$\pm$0.5 & 75.7$\pm$0.4 & 63.7$\pm$0.3 & 69.4$\pm$0.9                                                    \\ \hline
\begin{tabular}[c]{@{}c@{}}Reducing LDP bins, \\ dropping constant features\end{tabular} & 81.0$\pm$0.4 & 69.3$\pm$0.4 & 78.8$\pm$0.6 & 55.0$\pm$1.7 & 73.4$\pm$1.7 & 65.8$\pm$0.4 & 75.9$\pm$0.3 & 63.7$\pm$0.3 & 70.4$\pm$0.7                                                    \\ \hline
\begin{tabular}[c]{@{}c@{}}Tuning Random Forest\\ hyperparameters\end{tabular}       & 82.9$\pm$0.2 & 68.9$\pm$0.2 & 80.8$\pm$0.3 & 66.7$\pm$1.9 & 73.6$\pm$0.7 & 66.0$\pm$0.5 & 76.3$\pm$0.2 & 64.4$\pm$0.3 & 72.5$\pm$0.5                                                    \\ \hline
\end{tabular}
}
\label{table_appendix_detailed_improvement_results}
\end{table}

The last row corresponds to MOLTOP, with all improvements, and it achieves the best result in all cases. Additionally, the proposed improvements not only always result in the increase of average AUROC, but also almost always improve results for all datasets. Analyzing the detailed effects of particular changes on different datasets allows us to infer, which features are the most important for a given task.

For example, for BBBP dataset, introducing the atoms and bonds features gave the largest improvement of $3.4\%$, which aligns with chemical insight that particular elements and bonds are well correlated with the ability to penetrate the blood-brain barrier. HIV dataset shows similar behavior, with $3.6\%$ improvement. On the other hand, introducing those features for SIDER and Tox21 results in negligible change of $0.1\%$ and $0.3\%$, respectively. However, the proposed topological descriptors increase AUROC by $4.9\%$ and $6.7\%$ for those datasets, which highlights that drug side effects and various toxicity targets are more affected by the overall topology of the molecule.

\section{Reproducibility and hardware details}
\label{appendix_reproducibility}

To ensure the full reproducibility of our results, we used Poetry tool \cite{Poetry} to pin the exact version of all dependencies in the project, including transitive dependencies of directly used libraries. We distribute the resulting \texttt{poetry.lock} file, as well as \texttt{requirements.txt} file generated from it, along with our source code. This ensures the exact reproducibility of all results that is OS-agnostic and hardware-agnostic.

We conduct all experiments on CPU, since some operations on GPU are inherently nondeterministic, e.g. those related to processing sparse matrices in PyTorch Geometric. Due to efficiency of MOLTOP, the usage of GPU is also not necessary. All experiments were run on a machine with Intel Core i7-12700KF 3.61 GHz CPU and 32 GB RAM, running Windows 10 OS. We additionally ran the experiments on a second machine with Intel Core i7-10850H 2.70 GHz CPU and 32 GB RAM, running Linux Ubuntu 22.04 OS. The results were exactly the same in all cases.

\section{Evaluation protocols of other GNNs}
\label{appendix_evaluation_protocols_of_GNNs}

Here, we compare the evaluation protocol presented in this paper with alternatives found in the literature. In particular, we focus on the distinction between different types of splits, and the subtle differences between them, which render many direct comparisons unfeasible.

Random split, typically used in machine learning, just randomly (or, precisely, pseudorandomly, since we can set the random seed) selects the test set. It is interpolative in nature, i.e. the test set roughly follows the overall distribution of the data. This is not realistic for molecular property prediction, where we are often interested in novel compounds. Those tasks are extrapolative in nature, i.e. it is expected for future molecules to be structurally different from the existing ones. If time information is available, we can use a time split, like for PDBbind dataset in \cite{MoleculeNet_Wu}. However, this is almost never the case, and we use scaffold split instead, also proposed for evaluation of molecular classification in \cite{MoleculeNet_Wu}. It aims to take the least common groups of structurally similar molecules into the test set, which requires out-of-distribution generalization to achieve a good score. In many cases, this is a good approximation of a time split \cite{DMPNN_Yang}.

Firstly, we compute the Bemis-Murcko scaffold \cite{Bemis_Murcko_scaffold} for each molecule, and then we group molecules by their scaffolds. The subtle differences in the algorithm dividing them into training, validation and test sets determine practical aspects of evaluating classification accuracy. In fact, they are the major source of differences in scores observed in molecular property prediction literature.

As described in \cite{MoleculeNet_Wu}, we put the smallest groups of scaffolds in the test set, until we get the required size, and then we do the same for the validation set. All other scaffolds, which are the most common, constitute the training set. This is a fully deterministic setting, and was used in e.g. \cite{Pretraining_GNNs_Hu,GEM_Fang}. Splits provided by OGB \cite{OGB_Hu} also follow this protocol.

On the other hand, multiple works, such as D-MPNN \cite{DMPNN_Yang} and GROVER \cite{GROVER_Rong}, explicitly state that they compute scaffold splits multiple times, which indicates a non-deterministic process. This is indeed the case, since \cite{DMPNN_Yang} explicitly describe that they put any scaffold groups larger than half the test size into the training set, and then the remaining groups are put randomly into training, validation and test datasets. This randomness will very likely result in larger scaffold groups in validation and training sets than in the case of deterministic scaffold split. This setting is called \q{balanced scaffold split} in \cite{GNN_pretraining_benchmark_Sun}.

This distinction actually makes a very significant difference in scores, as analyzed in detail by \cite{GNN_pretraining_benchmark_Sun}. \q{Balanced scaffold split} achieves much higher results, often by $5\%$ or as much as $20\%$ on BBBP dataset, for multiple models. This is particularly problematic, as this difference is very subtle and not highlighted in the papers at all.

GROVER \cite{GROVER_Rong} mentions that they use three different random-seeded scaffold splits. Checking the official code \cite{GROVER_Github}, we found \q{balanced\_scaffold} in multiple places, confirming that the authors were aware of the difference between scaffold split and \q{balanced scaffold split}. This is additionally evidenced by comments in the code and function arguments. For this reason, we conclude that high scores in \cite{GROVER_Rong} are, at least in some part, the result of this choice.

As a consequence of this splitting differences, we cannot compare our results directly with the ones presented in D-MPNN or GROVER papers. The scores for both models, taken from GEM paper \cite{GEM_Fang}, which we use for the comparison, are lower and much more in line with results for deterministic scaffold split, as presented in \cite{GNN_pretraining_benchmark_Sun}. This is, again, the easiest to check with BBBP dataset, on which the difference is about $20\%$ just due to the splitting strategy.

We cannot compare to R-MAT \cite{R_MAT_Maziarka}, because they use scaffold split only for BBBP, and use nonstandard datasets apart from BBBP and ESOL. Additionally, they use random split for other datasets. However, we point that they recalculate GROVER results for BBBP using scaffold split, and get the result that aligns with the one in the GEM paper.

As for AttentiveFP \cite{AttentiveFP_Xiong}, the results seem particularly troubling. In the paper, the authors state that they use scaffold split for BBBP, BACE and HIV datasets, following \cite{MoleculeNet_Wu} (later papers generally use scaffold splits for all MoleculeNet datasets). However, checking the official code \cite{AttentiveFP_Github_page}, the word \q{scaffold} does not appear anywhere in the code, and verifying the code for those $3$ datasets, the random split is used in every case. Additionally, the difference in results between the original paper, and AttentiveFP results in the GEM paper would indicate that this is indeed the case. Because of this, we also do not compare directly to AttentiveFP results from the original paper, but rather from \cite{GEM_Fang}.

In conclusion, comparison to other papers for molecular property prediction in many cases requires very in-depth verification of both papers, their exact wording, and analyzing the official code. Of course, there is nothing wrong with alternative evaluation protocols and splitting procedures, but the due to differences in terminology this can result in misunderstanding the actual evaluation protocol used. This is an unfortunate situation, and it requires further investigation for other papers.

\section{Estimation of GEM computational cost}
\label{appendix_GEM_cost}

Here, we provide an estimation of GEM computational cost for pretraining. While the total cost of pretraining on 20 million molecules is not stated in the paper \cite{GEM_Fang}, the authors provide a link to the official code on GitHub \cite{GEM_Github_page}. There, they provide a small subset of 2000 molecules from the ZINC dataset for a demo, with a note \q{The demo data will take several hours to finish in a single V100 GPU card}.

We make a very conservative assumption, that \q{several hours} means 5 hours. The entire pretraining dataset is about 10000 times larger, so we get 50 thousand GPU hours. Assuming 250 NVidia V100 GPUs (to compare to GROVER, which also used 250 V100 GPUs), this gives us 200 hours, or slightly over 8 days.

\section{Ablation study}
\label{appendix_ablation_study}

Here, we present the results of the ablation study. We remove one group of features at a time from MOLTOP, and present results in \cref{table_appendix_ablation_study}. We include the original MOLTOP results in the first row for reference.

\begin{table}[ht!]
\caption{Results of ablation study, after removing different groups of features.}
\centering
\resizebox{\textwidth}{!}{
\begin{tabular}{|c|c|c|c|c|c|c|c|c|c|}
\hline
Model                                                                                & BACE         & BBBP         & HIV          & ClinTox      & MUV          & SIDER        & Tox21        & ToxCast      & \begin{tabular}[c]{@{}c@{}}Avg.\\ AUROC $\uparrow$\end{tabular} \\ \hline
MOLTOP                                                                               & 82.9$\pm$0.2 & 68.9$\pm$0.2 & 80.8$\pm$0.3 & 66.7$\pm$1.9 & 73.6$\pm$0.7 & 66.0$\pm$0.5 & 76.3$\pm$0.2 & 64.4$\pm$0.3 & 72.5$\pm$0.5                                                    \\ \hline
Removed LDP features                                                                 & 80.9$\pm$0.3 & 69.0$\pm$0.2 & 79.2$\pm$0.2 & 65.8$\pm$1.7 & 65.6$\pm$0.8 & 66.1$\pm$0.2 & 75.9$\pm$0.2 & 63.8$\pm$0.3 & 70.8$\pm$0.5                                                    \\ \hline
Median bins instead of reduced                                                       & 83.4$\pm$0.2 & 68.8$\pm$0.2 & 80.5$\pm$0.3 & 66.3$\pm$2.1 & 67.2$\pm$1.3 & 65.3$\pm$0.3 & 76.2$\pm$0.2 & 64.3$\pm$0.1 & 71.5$\pm$0.6                                                    \\ \hline
Removed topological features                                                         & 83.3$\pm$0.3 & 68.2$\pm$0.2 & 79.3$\pm$0.4 & 64.7$\pm$2.0 & 69.8$\pm$1.0 & 66.8$\pm$0.2 & 75.8$\pm$0.1 & 64.4$\pm$0.2 & 71.5$\pm$0.6                                                    \\ \hline
\begin{tabular}[c]{@{}c@{}}Removed atoms and \\ bonds features\end{tabular}          & 81.7$\pm$0.1 & 64.9$\pm$0.2 & 76.5$\pm$0.4 & 54.4$\pm$2.5 & 64.1$\pm$1.0 & 60.8$\pm$0.4 & 69.2$\pm$0.2 & 60.7$\pm$0.2 & 66.5$\pm$0.6                                                    \\ \hline
50 bins instead of median                                                            & 83.0$\pm$0.2 & 68.5$\pm$0.2 & 80.6$\pm$0.3 & 64.8$\pm$2.0 & 70.5$\pm$0.8 & 66.2$\pm$0.4 & 76.2$\pm$0.2 & 64.3$\pm$0.3 & 71.8$\pm$0.6                                                    \\ \hline
\begin{tabular}[c]{@{}c@{}}Remove dropping \\ constant features\end{tabular}         & 82.8$\pm$0.2 & 68.6$\pm$0.2 & 80.7$\pm$0.4 & 66.5$\pm$2.1 & 72.8$\pm$1.2 & 66.1$\pm$0.4 & 76.3$\pm$0.2 & 64.3$\pm$0.2 & 72.3$\pm$0.6                                                    \\ \hline
\begin{tabular}[c]{@{}c@{}}Unoptimized Random Forest \\ hyperparameters\end{tabular} & 81.0$\pm$0.4 & 69.3$\pm$0.4 & 78.8$\pm$0.6 & 55.9$\pm$1.5 & 73.4$\pm$1.6 & 65.4$\pm$0.4 & 76.1$\pm$0.3 & 63.8$\pm$0.5 & 70.5$\pm$0.7                                                    \\ \hline
Remove max neighbors degree                                                          & 82.9$\pm$0.1 & 68.5$\pm$0.2 & 80.4$\pm$0.3 & 66.5$\pm$1.8 & 73.9$\pm$1.0 & 66.1$\pm$0.3 & 76.2$\pm$0.2 & 64.2$\pm$0.3 & 72.3$\pm$0.5                                                    \\ \hline
\end{tabular}
}
\label{table_appendix_ablation_study}
\end{table}

Removing any part decreases the average AUROC, and often by a large margin, validating our modelling choices. Removing atoms and bonds features results in the largest drop, which is expected, and emphasizes the importance of incorporating those features for molecular graph classification. The smallest drop is for removal of constant features, but the main goal of this step was to remove obviously useless features and reduce computational and memory complexity, so this was expected. In general, our proposed method shows graceful degradation and still performs well, after removal of any feature. Also, the worst result here, after removal of atoms and bonds, is still better than LTP results. Since this leaves only our topological features, this indicates that our chemical intuitions for their choice specifically for molecular data were correct.

We additionally check what is the impact of removing the weakest feature, the maximal degree of neighbors. The average AUROC is a bit lower, showing that while this feature may not be as useful as others, it still positively impacts the discriminative ability of MOLTOP.

\section{Graph kernels experiments}

Here, we provide results of additional experiments with graph kernels. We selected the most widely used kernels, representing various approches: vertex histogram (VH), edge histogram (EH), graphlet kernel, propagation kernel, shortest paths kernel, Weisfeiler-Lehman (WL), and WL Optimal Assignment (WL-OA). For node labels, we use atomic numbers. We tune the inverse regularization strength parameter C, considering values $[10^{-3}, 10^{-2}, 10^{-1}, 1, 10, 10^2, 10^3]$. We compare results on the same datasets as other models, except for HIV, MUV and peptides-func, for which we got OOM errors due to their size. See Table \cref{table_appendix_graph_kernels} for results.

\begin{table}[]
\caption{Results of experiments with graph kernels. The best results are bolded.}
\centering
\begin{tabular}{|c|c|c|c|c|c|c|c|}
\hline
Method                  & BACE          & BBBP          & ClinTox       & SIDER         & Tox21         & Avg. AUROC $\uparrow$ & Avg. rank $\uparrow$ \\ \hline
Edge histogram kernel   & 66.1          & 52.9          & 55.5          & 45.5          & 63.2          & 56.6                  & 7.2                  \\ \hline
Graphlet kernel         & 70.3          & 58.2          & 39.1          & 47.3          & 57.4          & 54.5                  & 7.2                  \\ \hline
Propagation kernel      & 71.7          & 62.8          & 68.3          & 60.8          & 67.5          & 66.2                  & 4.2                  \\ \hline
Shortest paths kernel   & 76.5          & 65.3          & 59.7          & 62.4          & 64.7          & 65.7                  & 4                    \\ \hline
Vertex histogram kernel & 66.0          & 59.7          & 58.7          & 59.4          & 60.1          & 60.8                  & 6.4                  \\ \hline
WL kernel               & 83.8          & 67.6          & 57.3          & 63.3          & 73.8          & 69.2                  & 3.4                  \\ \hline
WL-OA kernel            & \textbf{85.8} & \textbf{69.1} & 59.2          & 65.8          & 74.6          & 70.9                  & 2                    \\ \hline
MOLTOP         & 82.9          & 68.9          & \textbf{73.6} & \textbf{66.0} & \textbf{76.3} & \textbf{73.5}         & \textbf{1.6}         \\ \hline
\end{tabular}
\label{table_appendix_graph_kernels}
\end{table}

MOLTOP comes up on top, getting the best results on 3 datasets and close second on BBBP. The slightly worse results on BACE and BBBP shows that they are very topology-centric, in line with conclusions from ablation study in \cref{appendix_ablation_study}.

\section{Additional expressivity experiments}
\label{appendix_expressivity_experiments}

Here, we provide additional examples and the results of experiments for expressivity of MOLTOP, i.e. its ability to distinguish non-isomorphic graphs. In particular, we show that MOLTOP is able to distinguish graphs on which $1$-WL test \cite{Weisfeiler_Lehman} fails. In this section, we consider only topological features, i.e. degree features, EBC, ARI and SCAN histograms. They form a vector of integers, since they are the counts in histogram bins, therefore we deem two graphs distinguished if they differ at any index. In all cases, to better understand where the expressiveness of MOLTOP comes from, we analyze the results for the features independently, and for the full feature vector. We note that degree features, based on LDP, are equivalent to a WL-test with 2 iterations \cite{LDP}, therefore we include them as a control, expecting negative result in all cases.

Firstly, we provide examples of pairs of graphs from the previous publications on which various WL tests fail. We treat each pair of graphs as a separate $2$-sample dataset, making the number of bins equal to the size of those graphs. We consider only the pairs of the same size, since distinguishing differently sized graphs would be trivial.

In \cref{figure_decalin_bicyclopentyl}, we show decalin and bicyclopentyl, example from \cite{GNN_expressiveness_Sato,GSN_Bouritsas}. Those molecules are not isomorphic nor regular, but cannot be distinguished by $1$-WL test, and, by extension, by all typical message-passing GNNs. However, MOLTOP can distinguish them, due to inclusion of EBC - this is the only one of four topological features that is able to do so. The reason is that bicyclopentyl includes a bridge, which has a very high EBC value, and it does not appear in decalin. This also follows our chemical intuition and motivation for including EBC.

To compare the MOLTOP features against raw shortest path information, we present the example from \cite{Graphormer_Ying}, in \cref{figure_1_wl_simple}. Those two graphs are not distinguishable by $1$-WL test, but can be distinguished by using the sets of shortest paths distances, and therefore by the Graphformer. Blue and red nodes have different sets of shortest paths distances in two graphs. All MOLTOP features except for degree features can also distinguish those graphs. EBC utilizes more information than just the lengths of shortest paths, and detects a bridge. ARI and SCAN analyze the neighborhood connectivity structure, and can distinguish regular grid (\cref{figure_1_wl_simple}a) from the two-communities structure (\cref{figure_1_wl_simple}b).

\begin{figure}[!htb]
  \centering
  \begin{tabular}{cc}
  \includegraphics[width=0.22\columnwidth]{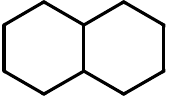} &
  \includegraphics[width=0.3\columnwidth]{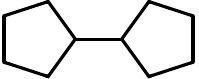} \\
  (a) & (b) 
  \end{tabular}
\vspace{0.2cm}
\caption{Two molecular graphs: (a) decalin, (b) bicyclopentyl.}
\label{figure_decalin_bicyclopentyl}
\vspace{0.2cm}
\end{figure}

\begin{figure}[!htb]
\centering
\includegraphics[width=0.75\textwidth]{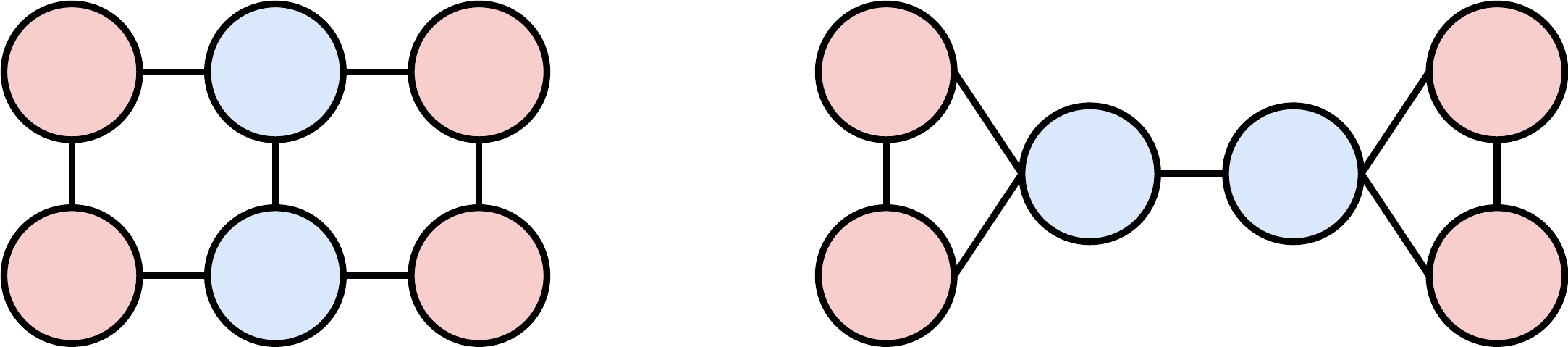}
\vspace{0.2cm}
\caption{The graphs, which cannot be distinguished by 1-WL-test, but their MOLTOP feature vectors are different.}
\label{figure_1_wl_simple}
\vspace{0.5cm}
\end{figure}

In the main body, we show MOLTOP achieves perfect result on \textit{sr25} dataset, which consists of strongly regular graphs. We also show the example of $3$-regular (not strongly regular) molecules from \cite{GNN_expressiveness_Sato} in \cref{figure_decaprismane_dodecahedrane}, decaprismane and dodecahedrane. They are not isomorphic, because decaprismane contains a $4$-cycle, while dodecahedrane does not. Those graphs are not distinguishable by typical message-passing GNNs, since they cannot distinguish $k$-regular graphs with the same size and features \cite{GNN_expressiveness_Sato}. All MOLTOP features, apart from degree features, can distinguish those graphs, most likely because $k$-regularity is a local feature and can be verified both by analyzing paths and neighborhood connectivity patterns.

\begin{figure}[ht!]
\centering
\begin{tabular}{cc}
\includegraphics[width=0.40\columnwidth]{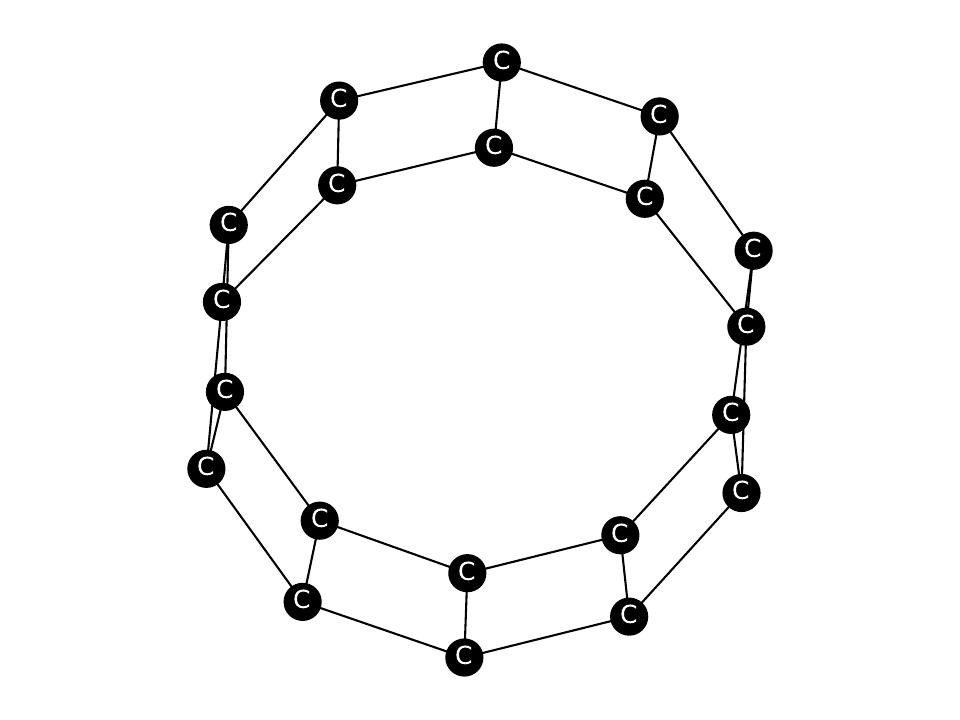}
&
\includegraphics[width=0.40\columnwidth]{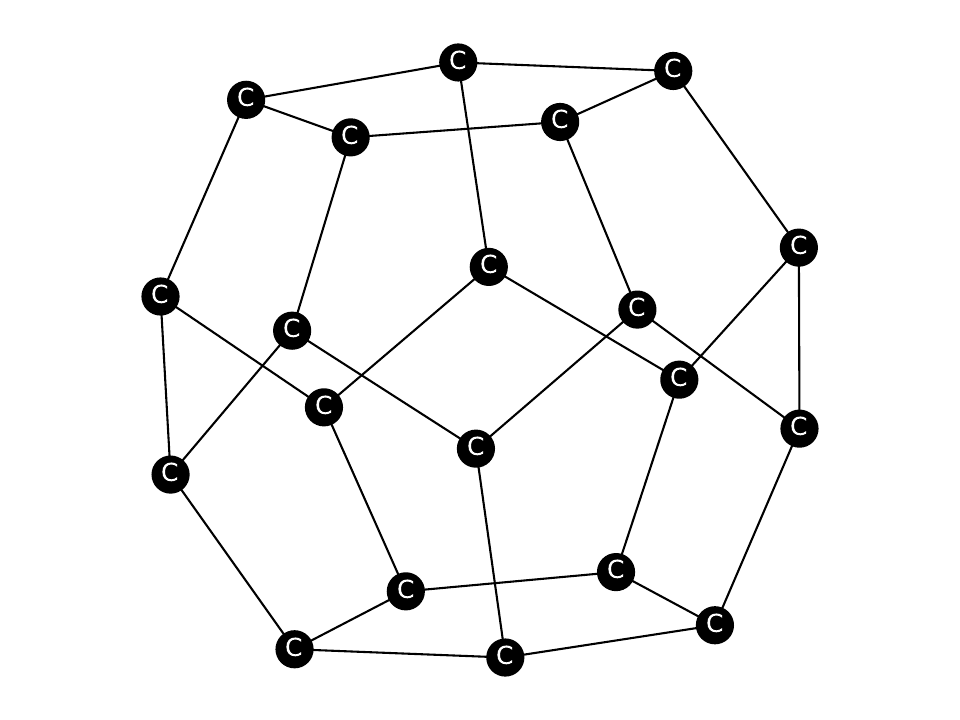}\\
(a) & (b)
\end{tabular}
\vspace{0.2cm}
\caption{Two $3$-regular graphs: (a) decaprismane, (b) dodecahedrane.}
\label{figure_decaprismane_dodecahedrane}
\vspace{0.5cm}
\end{figure}

There are, however, simple graphs which are not distinguishable by MOLTOP, as shown in \cref{figure_3_wl}, following example from \cite{GNN_expressiveness_Sato}. Those graphs are distinguishable by $3$-WL test, since it considers 3-tuple of vertices and can therefore detect disconnectedness in the left graph. MOLTOP, on the other hand, fails because it cannot detect this fact based on any of its features. However, we recall that $3$-WL cannot distinguish strongly regular graphs \cite{GNN_expressivity_testing_Balcilar}, while MOLTOP can, achieving perfect result on \textit{sr25}. This indicates that, interestingly, it does not fit into the traditional $k$-WL hierarchy.

\begin{figure}[h!]
\centering
\includegraphics[width=0.75\textwidth]{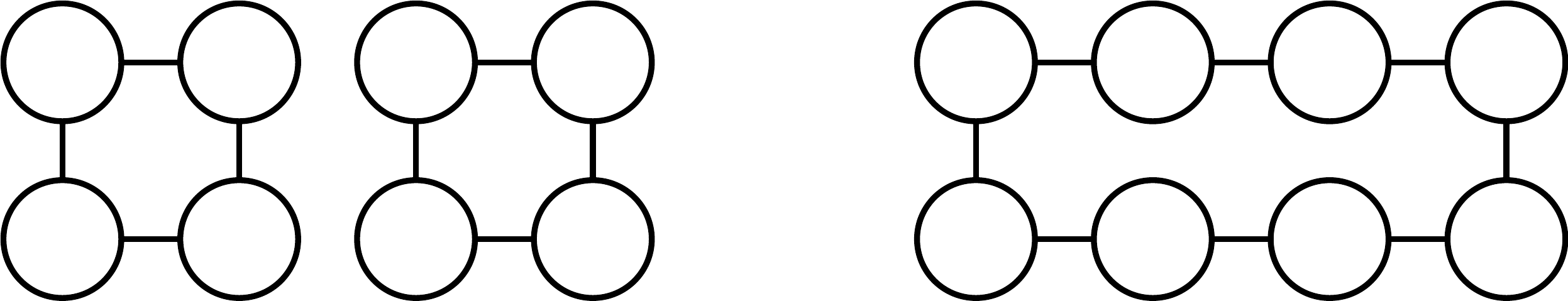}
\vspace{0.2cm}
\caption{Two graphs, which can be distinguished by $3$-WL test, but not by MOLTOP features.}
\label{figure_3_wl}
\end{figure}

\end{document}